\newcommand{\wh}{\widehat}
\newcommand{\wt}{\widetilde}
\newcommand{\G}{\mathcal{G}}
\newcommand{\I}{\mathcal{I}}
\newcommand{\X}{\mathcal{X}}
\newcommand{\V}{\mathcal{V}}
\newcommand{\E}{\mathcal{E}}
\newcommand{\W}{\mathcal{W}}
\newcommand{\M}{\mathbf{M}}
\newcommand{\R}{\mathbb{R}}
\newcommand{\e}{\epsilon}
\newcommand{\PD}{\rm{PD}}
\newtheorem{theorem}{Theorem}
\newtheorem*{theorem*}{Theorem}
\title{EMP: Effective Multidimensional Persistence for Graph Representation Learning}
\author[I.S. Dominguez et al.]{%
Ignacio Segovia-Dominguez\thanks{Equal contribution.}\\
{West Virginia University}\\
\email{Ignacio.SegoviaDominguez@mail.wvu.edu}\And
Yuzhou Chen\footnotemark[1]\\
{Temple University}\\
\email{yuzhou.chen@temple.edu}\And
Cuneyt G. Akcora\\
{University of Central Florida}\\
\email{cuneyt.akcora@ucf.edu}\And
Zhiwei Zhen\\
{University of Texas at Dallas}\\
\email{zhiwei.zhen@utdallas.edu}\And
Murat Kantarcioglu\\
{University of Texas at Dallas}\\
\email{muratk@utdallas.edu}\And
Yulia R. Gel\\
{University of Texas at Dallas}, USA\\
\email{ygl@utdallas.edu}\And
Baris Coskunuzer\\
{University of Texas at Dallas}\\
\email{coskunuz@utdallas.edu}
}
\begin{document}
\maketitle

\begin{abstract}

Topological data analysis (TDA) is gaining prominence across a wide spectrum of machine learning  tasks that spans from manifold learning to graph classification. A pivotal technique within TDA is persistent homology (PH), which furnishes an exclusive topological imprint of data by tracing the evolution of latent structures as a scale parameter changes. Present PH tools are confined to analyzing data through a single filter parameter. However, many scenarios necessitate the consideration of multiple relevant parameters to attain finer insights into the data. We address this issue by introducing the Effective Multidimensional Persistence (EMP) framework. This framework empowers the exploration of data by simultaneously varying multiple scale parameters. The framework integrates descriptor functions into the analysis process, yielding a highly expressive data summary. It seamlessly integrates established single PH summaries into multidimensional counterparts like EMP Landscapes, Silhouettes, Images, and Surfaces. These summaries represent data's multidimensional aspects as matrices and arrays, aligning effectively with diverse ML models. We provide theoretical guarantees and stability proofs for EMP summaries. We demonstrate EMP's utility in graph classification tasks, showing its effectiveness. Results reveal that EMP enhances various single PH descriptors, outperforming cutting-edge methods on multiple benchmark datasets.

\end{abstract}

\section{Introduction}\label{sec:intro}

In the past decade, topological data analysis proved to be a powerful machinery to discover hidden patterns in various forms of data that are otherwise inaccessible with more traditional methods~\cite{chen2022time,akcora2021blockchain}.  In particular, for graph machine learning (ML) tasks, while many traditional methods fail, TDA and, specifically, tools of persistent homology (PH), have demonstrated a high potential to detect local and global patterns and to produce a unique topological fingerprint to be used in various ML tasks~\cite{chen2021tamp}. 
This makes PH particularly attractive for capturing various characteristics of the complex data which may play the key role behind the learning task performance.

In turn, multiparameter persistence (multipersistence or MP) is a novel idea to further advance the PH machinery by analyzing the data in a much finer way simultaneously along multiple dimensions. However, due to technical issues stemming from its multidimensional structure in commutative algebra, a general definition for MP has not yet been established. In this paper, we develop an alternative approach to utilize multipersistence ideas efficiently for various types of data, with a main focus on graph representation. In particular, we bypass technical issues with the MP (\cref{sec:MP_theory}) by extracting very practical summaries via the slicing approach in a structured way. We then obtain computationally efficient multidimensional topological fingerprints of the data in the form of matrices and linear arrays which can serve as input to ML models.

Our main contribution lies in employing interpretable persistence features to achieve accurate graph classification. This is crucial because existing graph neural networks (GNNs) demand extensive data and costly training, yet lack insights into specific classification outcomes. Unlike GNNs, our single and multi-parameter persistence features offer insights into the evolving underlying graphs as the scale parameter changes, enabling the creation of efficient machine learning models. In domains with limited graph data, the predictive models we construct using topological summaries from single and multi-parameter persistence achieve accuracy levels comparable to the top GNN models.

\smallskip
{\bf Our contributions} can be summarized as follows:

\begin{itemize}
 
    \item[\ding{227}] We develop a computationally efficient and highly expressive EMP framework that provides multidimensional topological fingerprints of the data. EMP expands popular summaries of single persistence to multidimensions by adapting an effective slicing direction.
     
     \item[\ding{227}] We derive theoretical stability guarantees of the new topological summaries.
     
    \item[\ding{227}] We illustrate the utility of EMP summaries in various settings and compare our results to state-of-the-art (SOTA) methods. Our numerical experiments demonstrate that EMP summaries outperform SOTA in a broad range of benchmark datasets for graph classification tasks.

\end{itemize}

\section{Related Work} 
\label{sec:relatedwork}

\subsection{Persistent Homology} \label{sec:MP_related}

Persistent homology is a key tool in topological data analysis to deliver invaluable and complementary information on the intrinsic properties of data that are inaccessible with conventional methods~\citep{chazal2021introduction,hensel2021survey}. In the past decade, PH has become quite popular in various ML tasks, ranging from manifold learning to medical image analysis, and material science to finance (TDA applications library~\citep{giunti22}).

Multipersistence is a highly promising approach with the potential to significantly improve the success of single parameter persistence (SP) in applications of TDA, but there exist some fundamental challenges related to converting this novel idea into an effective feature extraction method, as follows. Except for some special cases, MP theory tends to suffer from the problem of the nonexistence of barcode decomposition because of the partially ordered structure of the index set $\{(\alpha_i,\beta_j)\}$~\citep{botnan2022introduction}. Lesnick et al. \cite{lesnick2015interactive} proposed a solution to circumvent this problem using the \textit{slicing technique}. This method involves examining one-dimensional fibers within the multiparameter domain. In these fibers, the multidimensional persistence module is constrained to a single direction (referred to as a \textit{slice}), and single persistence analysis is applied to this one-dimensional slice. Later, by using this novel idea, Carri{\`e}re et al.~\cite{carriere2020multiparameter} combined several slicing directions (vineyards) and obtained a vectorization by summarizing the persistence diagrams (PDs) in these directions. There are several promising recent studies in this direction~\citep{botnan2021signed, vipond2020multiparameter}. However, these methods often come with several drawbacks, primarily related to computational expenses. Consequently, their practical use for real-world problems is constrained. Very recently, a couple of very successful MP vectorizations proved the potential of MP in several settings~\cite{loiseaux2023framework, loiseaux2023stable}. Here, we aim to add a practical and highly efficient way to use MP approach for various forms of data and provide a multidimensional topological vectorization with EMP summaries.

\subsection{Graph Neural Networks} \label{sec:graph_related}

After the success of convolutional neural networks (CNN) on image-based tasks, graph neural networks (GNNs) have emerged as powerful tools for graph-level classification and representation learning. Based on the spectral graph theory, Bruna et al. introduced a graph-based convolution in the Fourier domain~\cite{bruna2013spectral} . However, the complexity of this model is high since all Laplacian eigenvectors are needed. To tackle this problem, ChebNet~\cite{defferrard2016convolutional} integrated spectral graph convolution with Chebyshev polynomials. Then, Graph Convolutional Networks (GCNs) of~\cite{kipf2016semi} simplified the graph convolution with a localized first-order approximation. More recently, various approaches proposed accumulating graph information from a wider neighborhood, using diffusion aggregation and random walks. Such higher-order methods include approximate personalized propagation of neural predictions (APPNP)~\cite{klicpera2018predict}, higher-order graph convolutional architectures (MixHop)~\cite{abu2019mixhop}, multi-scale graph convolution (N-GCN)~\cite{abu2020n}, and 
 L\'evy Flights Graph Convolutional Networks (LFGCN)~\cite{LFGCN}.
In addition to random walks, other recent approaches include GNNs on directed graphs (MotifNet)~\cite{monti2018motifnet}, graph convolutional networks with attention mechanism (GAT, SPAGAN)~\cite{velivckovic2017graph,yang2019spagan}, and graph Markov neural network (GMNN)~\cite{qu2019gmnn}.  While GNNs produce great performances in many graph learning tasks, they tend to suffer from over-smoothing problems and are vulnerable to graph perturbations.

\section{Background} \label{sec:background}

We start with providing the basic background for our framework.  Since our primary focus pertains to graph representation learning in this study, we elucidate our methodology within the context of graphs. For a comprehensive treatment of the classical persistent homology construction in various contexts, consult~\cite{dey2022computational}. Due to space limitations, our methodology on alternative data forms, such as point clouds and image data are explained in \Cref{sec:othertypedata,sec:otherfiltration}.

\subsection{Persistent Homology} \label{sec:PH}

Persistent Homology  is a mathematical machinery to capture the hidden shape patterns in the data by using algebraic topology tools. PH extracts this information by keeping track of evolution of the topological features (components, loops, cavities) created in the data while looking at it in different resolutions~\cite{chazal2021introduction}.

For a given graph $\G$, consider a nested sequence of subgraphs $\G_1 \subseteq \ldots \subseteq \G_N=\G$. For each $\G_i$, define an abstract simplicial complex 
$\wh{\G}_{i}$, $1\leq i\leq N$, yielding a filtration of complexes $\wh{\G}_{1} \subseteq \ldots \subseteq \wh{\G}_{N}$. 
Here, clique complexes are among the most common ones, i.e., a clique complex $\wh{\G}$ is obtained by assigning (filling with) a $k$-simplex to each complete $(k+1)$-complete subgraph in $\G$. For example,  a $3$-clique in $\G$, which is a complete $3$-subgraph, will be filled with a $2$-simplex (triangle). Then, in this sequence of simplicial complexes, we can systematically keep track of the evolution of the topological patterns. A $k$-dimensional topological feature (or $k$-hole) represent connected components ($0$-hole), loops ($1$-hole) and cavities ($2$-hole). For each $k$-hole $\sigma$, PH records its first appearance (birth) in the filtration sequence, say $\wh{\G}_{b_\sigma}$, and disappearance (death) in later complexes, $\wh{\G}_{d_\sigma}$ with a unique pair $(b_\sigma, d_\sigma)$, where $1\leq b_\sigma<d_\sigma\leq N$. PH records all these birth and death times of the topological features in \textit{persistence diagrams} (PDs). Let $0\leq k\leq D$ where $D$ is the highest dimension in the simplicial complex $\wh{\G}_N$. Then the $k$th persistence diagram ${\rm{PD}_k}(\G)=\{(b_\sigma, d_\sigma) \mid \sigma\in H_k(\wh{\G}_i) \mbox{ for } b_\sigma\leq i<d_\sigma\}$. Here, $H_k(\wh{\G}_i)$ represents the $k^{th}$ homology group of $\wh{\G}_i$ which keeps the information of the $k$-holes in the simplicial complex $\wh{\G}_i$. For the sake of notation, we skip the dimension (subscript $k$). With the intuition that the topological features with a long life span (persistent features) describe the hidden shape patterns in the data, these PDs provide a unique topological fingerprint of the graph $\G$.

As one can easily notice, the most important step in the PH machinery is the construction of the nested sequence of subgraphs $\G_1 \subseteq \ldots \subseteq \G_N=\G$. For a given unweighted graph $\G=(\V,\E)$ with $\V = \{v_1, \dots, v_{N}\}$ the set of nodes and $\E \subset \{\{v_i, v_j\} \in \V \times \V, i\neq j\}$ the set of edges, the most common technique is to use a filtering function $f:\mathcal{V}\to\R$ with a choice of thresholds $\I=\{\alpha_i\}$ where $\alpha_1=\min_{v \in \V} f(v)<\alpha_2<\ldots<\alpha_N=\max_{v \in \V} f(v)$. For $\alpha_i\in \I$, let $\V_i=\{v_r\in\V\mid f(v_r)\leq \alpha_i\}$. Let $\G_i$ be the induced subgraph of $\G$ by $\V_i$, i.e., $\G_i=(\V_i,\E_i)$ where $\E_i=\{e_{rs}\in \E\mid v_r,v_s\in\V_i\}$. This process yields a nested sequence of subgraphs $\G_1\subset \G_2\subset \ldots \subset\G_N=\G$, called \textit{the sublevel filtration} induced by the filtering function $f$. We denote PDs obtained via sublevel filtration for a filtering function $f$ as $\PD(\G,f)$. The choice of $f$ is crucial here, and in most cases, $f$ is either an important function from the domain of the data, e.g., atomic number for chemical compounds, or a function defined from intrinsic properties of the graph, e.g., degree and betweenness. Similarly, for a weighted graph, one can use sublevel filtration on the weights of the edges and obtain a suitable filtration reflecting the domain information stored in the edge weights. 

\subsection{Single Persistence Vectorizations} \label{sec:SPvec}
 
While PH extracts hidden shape patterns from data as persistence diagrams (PD), PDs being collections of points in $\R^2$ by itself are not very practical for statistical and machine learning purposes. Instead, the common technique is to faithfully represent PDs as kernels~\cite{kriege2020survey} or vectorizations~\cite{ali2022survey}. This provides a practical way to use the outputs of PH in real-life applications. \textit{Single Persistence Vectorizations} transform obtained PH information (i.e., PDs) into a function or a feature vector form which are much more suitable for machine learning tools than PDs. Common single persistence (SP) vectorization methods are Persistence Images~\cite{adams2017persistence}, Persistence Landscapes~\cite{Bubenik:2015}, Silhouettes~\cite{chazal2014stochastic}, and various Persistence Curves~\cite{chung2019persistence}. These vectorizations define a single variable or multivariable functions out of PDs, which can be used as fixed size $1D$ or $2D$ vectors in applications, i.e. $1\times n$ vectors or $m\times n$ vectors. For example, a Betti curve for a PD with $n$ thresholds can also be expressed as $1\times n$ size vectors. Similarly, Persistence Images is an example of $2D$ vectors with the chosen resolution (grid) size.

\subsection{Multidimensional Persistence} \label{sec:MP}

MultiPersistence introduces a novel concept that holds the potential to significantly enhance the performance of single-parameter persistence. The term \textit{single} is applied because we filter the data in just one direction, $\G_1\subset \dots\subset\G_N=\G$. The filtration's construction is pivotal in achieving a detailed analysis of the data to uncover concealed patterns. When utilizing a single function $f:\V\to\R$ containing crucial domain information (e.g., value for blockchain networks, atomic number for protein networks), it induces a single-parameter filtration as described earlier.

However, numerous datasets possess multiple highly relevant domain functions for data analysis. Employing these functions concurrently would yield a more comprehensive grasp of the concealed patterns. This insight led to the suggestion of the MP theory as a natural extension of single persistence (SP).

In simpler terms, if we use only one filtering function, sublevel sets induce a single parameter filtration $\wh{\G}_1\subset \dots\subset\wh{\G}_N=\wh{\G}$. Instead, if we use two or more functions, then it would give a way to study the data in a much finer resolution. For example, if we have two node functions $f:\V\to\R$ and $g:\V\to\R$ with valuable complementary information of the network, MP is presumed to produce a unique topological fingerprint combining the information from both functions. These pair of functions  $f,g$ induce a multivariate filtering function $F:\V \to \R^2$ with $F(v)=(f(v),g(v))$. Again, we can define a set of non-decreasing thresholds $\{\alpha_i\}_1^m$ and $\{\beta_j\}_1^n$ for $f$ and $g$ respectively. Then, $\V_{ij}=\{v_r\in \V\mid f(v_r)\leq \alpha_i , g(v_r)\leq\beta_j\}$, i.e., $\V_{ij}=F^{-1}((-\infty,\alpha_i]\times(-\infty,\beta_j])$. Then, let $\G_{ij}$ be the induced subgraph of $\G$ by $\V_{ij}$, i.e., the smallest subgraph of $\G$ generated by $\V_{ij}$. This induces a bifiltration of complexes $\{\wh{\G}_{ij}\mid 1\leq i\leq m, 1\leq j\leq n\}$. We can imagine $\{\wh{\G}_{ij}\}$ as a rectangular grid of size $m\times n$ (See \cref{Fig:ToyMP}).  

By computing the homology groups of these complexes, $\{H_k(\wh{\G}_{ij})\}$, we obtain the induced bi-graded persistence module (a rectangular grid of size $m\times n$). Again, the idea is to keep track of the $k$-dimensional topological features via the homology groups $\{H_k(\wh{\G}_{ij})\}$ in this grid. As we explained in \cref{sec:MP_theory}, because of the technical issues related to commutative algebra, converting the multipersistence module into a mathematical representation like a \textquote{Multipersistence Diagram} is unfeasible. As a  result, we do not have an effective vectorization of the MP module. These technical obstacles prevent this promising approach to reach its full potential in real-life applications. 

In this paper, we overcome this problem by producing practical and computationally efficient vectorizations by utilizing the slicing idea in the multipersistence grid in a structured way, as we describe next. 
 
\section{Effective Multidimensional Persistence Summaries} 
\label{sec:EMP2}

We now introduce our Effective MultiPersistence framework which describes a way to expand most single persistence vectorizations (\cref{sec:SPvec}) as multidimensional vectorizations by utilizing the MP approach. In particular, by using the existing single-parameter persistence vectorizations, we produce multidimensional vectors by effectively using one of the filtering directions as the \textit{slicing direction} in the multipersistence module. We give the basic setup in this section, and the generalization directions and further details in \Cref{sec:generalEMP}.

\subsection{EMP Framework} \label{sec:EMP}

To keep the exposition simple, we describe our framework for $2$-parameter multipersistence ($d=2$). For $d>2$, the construction is similar and given below. The outline is as follows: For   two given filtering functions $f,g$ for a graph $\G$, we use the first function $f$ to get a single parameter filtering of the data, i.e., $\G_1 \subseteq \ldots \subseteq \G_m=\G$. Then, we use the second function in each subgraph $\G_i$ to obtain persistence diagram $\PD(\G_i, g)$ for $1\leq i\leq m$. Hence, we obtain $m$ persistence diagrams $\{\PD(\G_i,g)\}_{i=1}^m$. Next, by applying the chosen SP vectorization $\varphi$ to each PD, we obtain $m$ different same length vector $\vec{\varphi}(\PD(\G_i,g))$, say $1\times k$ vector. By combining all $m$ $1D$-vectors, we obtain EMP vectorization $\M_\varphi$ with $\M_\varphi^i=\vec{\varphi}(\PD(\G_i,g))$ where $\M_\varphi^i$ represents the $i^{th}$ row of $\M_\varphi$ which is a $2D$-vector (matrix) of size $m\times k$ (See \cref{Fig:Framework}).

Here, we give the details for sublevel filtration for two filtering functions. In \Cref{sec:otherfiltration}, we explain how to modify the construction for weight filtrations or Vietoris-Rips filtrations. Let $\G=(\V,\E)$ be a graph, and let $f,g:\V\to \R$ be two filtering functions with threshold sets $\{\alpha_i\}_{i=1}^m$ and $\{\beta_j\}_{j=1}^n$ respectively.  Let $\V_i=\{v_r\in\V\mid f(v_r)\leq \alpha_i\}$. Let $\G_i$ be the induced subgraph of $\G$ by $\V_i$. This gives a filtering of the graph (nested subgraphs) as  $\G_1 \subseteq \ldots \subseteq \G_m=\G$. Recall that $g:\V\to \R$ is another filtering function for $\G$. Now, we fix $1\leq i_0\leq m$, and consider $\G_{i_0}$. By restricting $g$ on $\G_{i_0}$, we get persistence diagram $\PD(\G_{i_0}, g)$  as follows. Let $\V_{i_0j}=\{v_r\in\V_{i_0}\mid f(v_r)\leq \beta_j\}$, and let $\G_{i_0j}$ be the induced subgraph of $\G_{i_0}$ by $\V_{i_0j}$. This defines a finer filtering of the graph $\G_{i_0}$ as $\G_{i_01} \subseteq \G_{i_02} \ldots \subseteq \G_{i_0n}=\G_{i_0}$ (See \Cref{Fig:ToyMP} in the appendix). Corresponding clique complexes defines a filtration $\wh{\G}_{i_01} \subseteq \wh{\G}_{i_02} \ldots \subseteq \wh{\G}_{i_0n}=\wh{\G}_{i_0}$. This filtration gives the persistence diagram $\PD(\G_{i_0}, g)$. Hence, for each $1\leq i\leq m$, we obtain a persistence diagram $\PD(\G_i,g)$.  

The next step is to use vectorization on these $m$ persistence diagrams. Let $\varphi$ be a single persistence vectorization, e.g., Persistence Landscape, Silhouette, Entropy, Betti, Persistence Image. By applying the chosen SP vectorization $\varphi$ to each PD, we obtain a function $\varphi_i=\varphi(\PD(\G_i,g))$ where in most cases it is a single variable function on the threshold domain $[\beta_1,\beta_n]$, i.e., $\varphi_i:[\beta_1,\beta_n]\to \R$. For the multivariable case (e.g., Persistence Image), we give an explicit description in the examples section below. Most such vectorizations being induced from a discrete set of points $\PD(\G)$, they naturally can be expressed as a $1D$ vector of length $k$. In the examples below, we explain this conversion in detail. Then, let $\vec{\varphi}_i$ be the corresponding $1\times k$ vector for the function $\varphi_i$. Now, we are ready to define our EMP summary $\M_\varphi$ which is a $2D$-vector (a matrix) 
$$\M_\varphi^i=\vec{\varphi}_i \quad \mbox{for} \quad 1\leq i\leq m,$$
where $\M_\varphi^i$ is the $i^{th}$-row of $\M_\varphi$. Hence, $\M_\varphi$ is a $2D$-vector of size $m\times k$. Each row $\M_\varphi^i$ is the vectorization of the persistence diagram $\PD(\G_i, g)$ via the SP vectorization method $\varphi$. We use the first filtering function $f$ to get a finer look at the graph as $\G_1 \subseteq \ldots \subseteq \G_m=\G$. Then, we use the second filtering function $g$ to obtain the persistence diagrams $\PD(\G_i,g)$ of these finer pieces. In a way, we look at $\G$ with a two-dimensional resolution $\{\G_{ij}\}$ and we keep track of the evolution of topological features in the induced bifiltration  $\{\wh{\G}_{ij}\}$. The main advantage of this technique is that the outputs are fixed-size matrices (or arrays) for each dataset which is highly suitable for various machine learning models.

\begin{figure}[t!]
    \centering
    \includegraphics[width=.99\textwidth, angle =0]{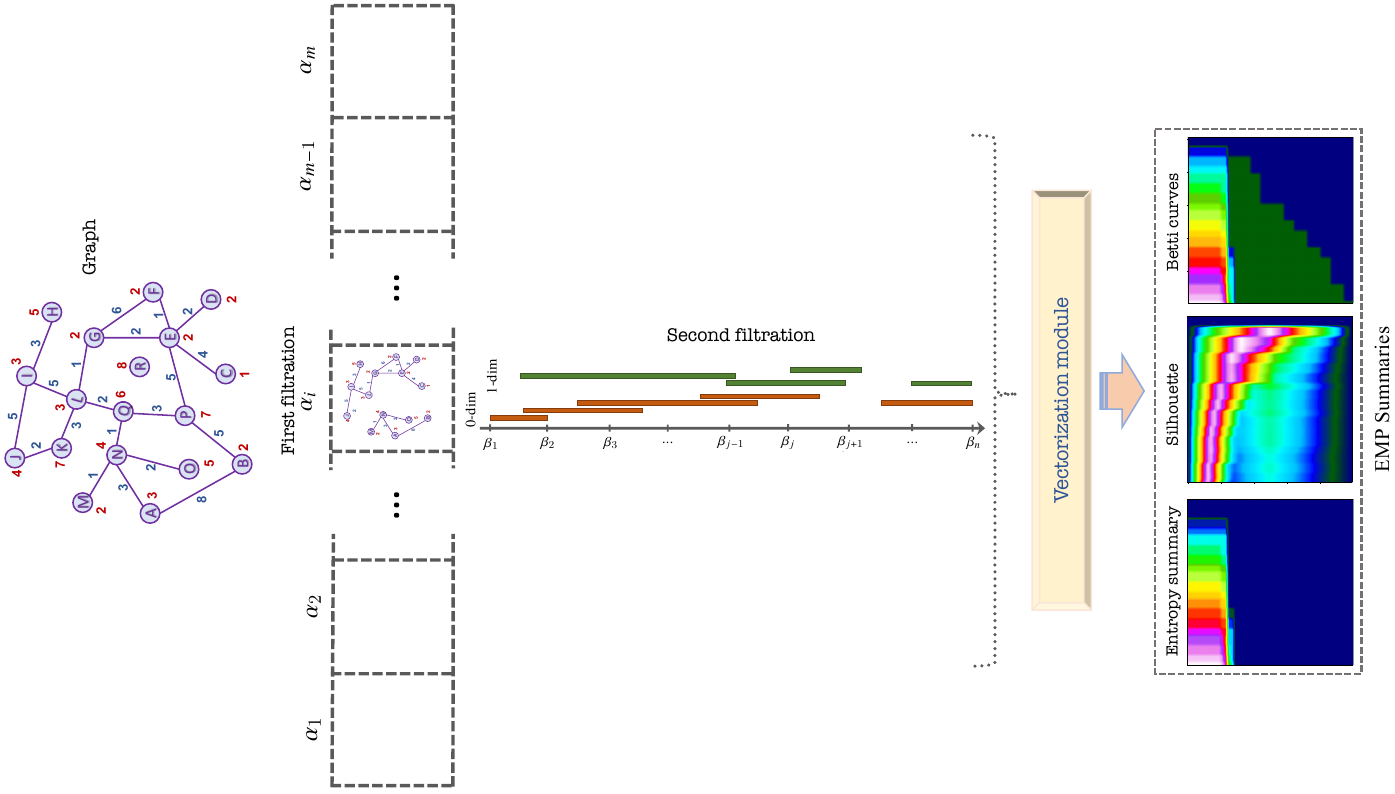}
    \caption{Illustration of the EMP framework for networks. Using the pair of filtering functions $f$, $g$ we define non-decreasing thresholds $\{\alpha_i\}_1^m$ and $\{\beta_j\}_1^n$, respectively, based on node features, red, and edge features, blue. Both, filtrations and vectorizations run in parallel to better use computational resources and produce EMP representations in a timely manner. 
    \label{Fig:Framework}}
\end{figure}

\smallskip
\noindent {\bf Order of the filtering functions.} \quad Since we only use horizontal slices, the first function is only used for finer filtration, and the second function gives the persistence diagrams. This makes our method a-symmetric (the order is important). Hence, one can change the order and get different performance results for the model. We discuss the order choice in detail and give experiments in \Cref{SecApx:AS:OrderFilt}.

\smallskip
\noindent {\bf Higher Dimensional Multipersistence.} \quad Similarly, for $d=3$, let $f,g,h$ be filtering functions and let $\{\G_{ij}\}$ be the bifiltering of the data, e.g., sublevel filtration for two functions $f, g$. Then, again by using the third function $h$, we find $m\cdot n$ persistence diagrams $\{\PD(\G_{ij},h)\}_{i,j=1}^{m,n}$. Similarly, for a given SP vectorization $\varphi$, one obtains a $1D$-vector $\vec{\varphi}(\PD(\G_{ij},g))$ of size $1\times k$ for each $i,j$. This produces a $3D$-vector (array) $\M_\varphi$ of size $m\times n\times k$ where $\M_\varphi^{ij}=\vec{\varphi}(\PD(\G_{ij},g))$. For $d>3$, one could follow a similar route.

\subsection{Examples of EMP Summaries} \label{sec:EMPexamples}

Here, we discuss explicit constructions of EMP summaries for most common SP vectorizations. As noted above, the framework is very general and it can be applied to most SP vectorization methods. In all the examples below, we use the following setup: Let $\G=(\V,\E)$ be a graph, and let $f,g:\V\to \R$ be two filtering functions with threshold sets $\{\alpha_i\}_{i=1}^m$ and $\{\beta_j\}_{j=1}^n$ respectively. As explained above, we first apply sublevel filtering with $f$ to get a sequence of nested subgraphs,  $\G_1 \subseteq \ldots \subseteq \G_m=\G$. Then, for each $\G_i$, we apply sublevel filtration with $g$ to get persistence diagram $\PD(\G_i,g)$. Therefore, we will have $m$ PDs. In the examples below, for a given SP vectorization $\varphi$, we explain how to obtain a vector $\vec{\varphi}(\PD(\G_i,g))$, and define the corresponding EMP $\M_\varphi$. Note that we skip the homology dimension (subscript $k$ for $PD_k(\G)$) in the discussion. In particular, for each dimension $k=0,1,\dots$, we will have one EMP $\M_\varphi(\G)$ (a matrix or array) corresponding to $\{\vec{\varphi}(PD_k(\G_i,g))\}$. The most common dimensions are $k=0$ and $k=1$ in applications. Recently, \citet{demir2022todd} has successfully applied a similar vectorization in drug discovery problem.

\smallskip
\noindent {\bf EMP Landscapes.} \quad Persistence Landscapes $\lambda$ are one of the most common SP vectorizations introduced by~\cite{Bubenik:2015}. For a given persistence diagram $\PD(\G)=\{(b_i,d_i)\}$, $\lambda$ produces a function $\lambda(\G)$ by using generating functions $\Lambda_i$ for each $(b_i,d_i)\in \PD(\G)$, i.e., $\Lambda_i:[b_i,d_i]\to\R$ is a piecewise linear function obtained by two line segments starting from $(b_i,0)$ and $(d_i,0)$ connecting to the same point $(\frac{b_i+d_i}{2},\frac{d_i-b_i}{2})$. Then, the \textit{Persistence Landscape} function $\lambda(\G):[\e_1,\e_q]\to\R$ for $t\in [\e_1,\e_q]$ is defined as 
$$\lambda(\G)(t)=\max_i\Lambda_i(t)$$
where $\{\e_k\}_1^q$ are thresholds for the filtration used.

Considering the piecewise linear structure of the function, $\lambda(\G)$ is completely determined by its values at $2q-1$ points, i.e., $\frac{b_i\pm d_i}{2}\in\{\e_1, \e_{1.5}, \e_2, \e_{2.5}, \dots ,\e_q\}$ where $\e_{k.5}={(\e_k+\e_{k+1})}/{2}$. Hence, a vector of size $1\times (2q-1)$ whose entries the values of this function would suffice to capture all the information needed, i.e.
$\vec{\lambda}=[ \lambda(\e_1)\ \lambda(\e_{1.5})\ \lambda(\e_2)\ \lambda(\e_{2.5})\ \lambda(\e_3)\ \dots \ \lambda(\e_q)]$

Considering we have threshold set $\{\beta_j\}_{j=1}^n$ for the second filtering function $g$, $\vec{\lambda}_i=\vec{\lambda}(\PD(\G_i,g))$ will be a vector of size $1\times 2n-1$. Then, as $\M_\lambda^i=\vec{\lambda}_i$ for each $1\leq i\leq m$, EMP Landscape $\M_\lambda(\G)$ would be a $2D$-vector (matrix) of size $m\times (2n-1)$.

\smallskip
\noindent {\bf EMP Surfaces.} \quad
Next, we give an important family of SP vectorizations, Persistence Curves~\cite{chung2019persistence}. This is an umbrella term for several different SP vectorizations, i.e., Betti Curves, Life Entropy, Landscapes, et al. Our EMP framework naturally adapts to all Persistence Curves to produce multidimensional vectorizations. As Persistence Curves produce a single variable function in general, they all can be represented as $1D$-vectors by choosing a suitable mesh size depending on the number of thresholds used. Here, we describe one of the most common Persistence Curves in detail, i.e., Betti Curves. It is straightforward to generalize the construction to other Persistence Curves.

Betti curves are one of the simplest SP vectorizations as it gives the count of the topological features at a given threshold interval. In particular, $\beta_k(\Delta)$ is the total count of $k$-dimensional topological feature in the simplicial complex $\Delta$, i.e., $\beta_k(\Delta)=rank(H_k(\Delta))$ (See \cref{Fig:ToyMP} in the Appendix). In particular, $\beta_k(\G):[\e_1,\e_{q+1}]\to\R$ is a step function defined as 
$\beta_k(\G)(t)=rank(H_k(\wh{\G}_i))$
for $t\in [\e_i,\e_{i+1})$, where $\{\e_i\}_1^q$ represents the thresholds for the filtration used. Considering this is a step function where the function is constant for each interval $[\e_i,\e_{i+1})$, it can be perfectly represented by a vector of size $1\times q$ as $\vec{\beta}(\G)=[ \beta(1)\ \beta(2)\ \beta(3)\ \dots \ \beta(q)]$. 

Then, with the threshold set $\{\beta_j\}_{j=1}^n$ for the second filtering function $g$, $\vec{\beta}_i=\vec{\beta}(\PD(\G_i,g))$ will be a vector of size $1\times n$. Then, as $\M_\beta^i=\vec{\beta}_i$ for each $1\leq i\leq m$, \textit{EMP Betti Summary} $\M_\beta(\G)$ would be a $2D$-vector (matrix) of size $m\times n$ (\cref{Fig:EMP}). In particular, each entry $\M_\beta=[m_{ij}]$ is just the Betti number of the corresponding clique complex in the bifiltration $\{\wh{\G}_{ij}\}$, i.e., $m_{ij}=\beta(\wh{\G}_{ij}).$ This matrix $\M_\beta$ is also called bigraded Betti numbers in the literature, and computationally much faster than other vectorizations~\cite{lesnick2022computing}.

We give further EMP examples (\textit{EMP Silhouettes} and \textit{EMP Images}) in \Cref{sec:further_examples}.

\begin{figure*}[t!]
    \centering
    \includegraphics[width=1.0\textwidth]{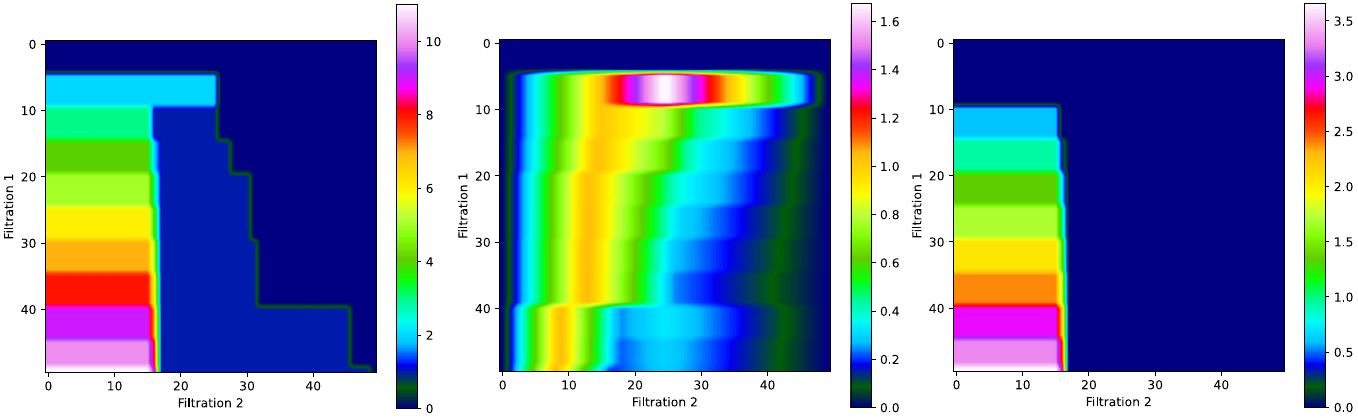}
    \caption{For the same network and the same filtering functions, EMP Betti Summary (left), EMP Silhouette (center), and EMP Entropy Summary (right) can produce highly different topological summaries emphasizing different information in persistence diagrams. \label{Fig:EMP}}
\end{figure*}

\subsection{Stability of EMP Summaries}  \label{sec:stability-main}

We now demonstrate that when the original single-parameter vectorization $\varphi$ is stable, the resulting EMP vectorization $\M_\varphi$ also maintains stability. The specifics of the stability concept in persistence theory are outlined, along with examples of stable SP vectorizations, in \cref{sec:stability2}.

We give generalizations of these stability notions and metrics in multidimensions and the proof of the following stability theorem in \Cref{sec:stability}.

\begin{theorem} \label{thm:stabilityMain}
Let $\varphi$ be a stable SP vectorization. Then, the induced EMP Vectorization $\M_\varphi$ is also stable, i.e., with the notation given in \Cref{sec:stability}, there exists $\wh{C}_\varphi>0$ such that for any pair of graphs $\G^+$ and $\G^-$, we have the following inequality.
$$\mathfrak{D}(\M_\varphi(\G^+),\M_\varphi(\G^-))\leq \wh{C}_\varphi\cdot \mathbf{D}_{p_\varphi}(\{\PD(\G^+)\},\{\PD(\G^-)\}).$$
\end{theorem}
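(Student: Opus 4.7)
The plan is to reduce multidimensional stability to a row-wise application of single-parameter stability, with the metrics $\mathfrak{D}$ and $\mathbf{D}_{p_\varphi}$ set up so that the reduction is essentially bookkeeping. First I would fix the set-up used throughout the paper: both graphs $\G^+$ and $\G^-$ are filtered against the same common threshold grid $\{\alpha_i\}_{i=1}^m$ for $f$ and $\{\beta_j\}_{j=1}^n$ for $g$ (standard for a graph-classification pipeline). Then the induced EMP summaries $\M_\varphi(\G^+),\M_\varphi(\G^-)$ are $m\times k$ matrices whose $i^{\text{th}}$ rows are $\vec{\varphi}(\PD(\G_i^+,g))$ and $\vec{\varphi}(\PD(\G_i^-,g))$, respectively. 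The candidate matrix distance is $\mathfrak{D}(\M,\M') = \bigl(\sum_{i=1}^m \|\M^i-\M'^i\|_{p_\varphi}^{p_\varphi}\bigr)^{1/p_\varphi}$ (with the obvious $\sup$ variant in the bottleneck case $p_\varphi=\infty$), and the corresponding distance on collections of PDs is $\mathbf{D}_{p_\varphi}\bigl(\{\PD(\G^+)\},\{\PD(\G^-)\}\bigr) = \bigl(\sum_{i=1}^m W_{p_\varphi}\bigl(\PD(\G_i^+,g),\PD(\G_i^-,g)\bigr)^{p_\varphi}\bigr)^{1/p_\varphi}$, where $W_{p_\varphi}$ is the $p_\varphi$-Wasserstein (or bottleneck) distance against which $\varphi$ is stable. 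I would state these definitions formally in the appendix section referenced by the theorem and then invoke them.

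Second, for each slice $i \in \{1,\dots,m\}$ the hypothesis that $\varphi$ is a stable SP vectorization gives a constant $C_\varphi>0$ with
\[
\bigl\|\M_\varphi^i(\G^+)-\M_\varphi^i(\G^-)\bigr\| \;\le\; C_\varphi \cdot W_{p_\varphi}\bigl(\PD(\G_i^+,g),\PD(\G_i^-,g)\bigr),
\]
which is simply the stated SP-stability of $\varphi$ applied to the pair of persistence diagrams associated to the $i^{\text{th}}$ slice. I would then raise both sides to the $p_\varphi$-th power, sum over $i=1,\dots,m$, and take the $p_\varphi$-th root, yielding
\[
\mathfrak{D}\bigl(\M_\varphi(\G^+),\M_\varphi(\G^-)\bigr) \;\le\; C_\varphi \cdot \mathbf{D}_{p_\varphi}\bigl(\{\PD(\G^+)\},\{\PD(\G^-)\}\bigr),
\]
so that the theorem holds with $\wh{C}_\varphi = C_\varphi$. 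For $p_\varphi=\infty$ the same argument runs with $\max_i$ in place of the $\ell^{p_\varphi}$ aggregation and the constant again carries over verbatim. The higher-dimensional case ($d>2$) is identical after indexing slices by the tuple $(i,j,\dots)$ instead of the single index $i$, so no new idea is required.

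The main obstacle is not in the algebra of the reduction but in choosing the metrics $\mathfrak{D}$ and $\mathbf{D}_{p_\varphi}$ so that (i) they are genuine distances on EMP matrices and on collections of PDs, (ii) they specialize to the familiar SP stability metric on each slice, and (iii) they remain well-defined when the raw filtration thresholds differ across graphs. To address (iii) I would either assume a common threshold grid (as is standard in the benchmarks in the paper) or pass to a common refinement, noting that persistence is stable under such refinements. A secondary subtlety is that different vectorizations $\varphi$ are stable against different metrics (e.g.\ landscapes against $W_p$, Betti curves against $W_1$, images against $W_2$ after weighting, silhouettes against $W_p$), so the parameter $p_\varphi$ and the constant $C_\varphi$ must be inherited slice-wise from the base vectorization; listing these cases in the appendix is what makes the statement genuinely usable and completes the proof.
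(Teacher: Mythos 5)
Your proposal is correct and follows essentially the same route as the paper: apply the single-parameter stability inequality slice-by-slice and aggregate over the $m$ rows, with $\wh{C}_\varphi=C_\varphi$. The only (cosmetic) difference is that the paper defines both $\mathfrak{D}$ and $\mathbf{D}_{p_\varphi}$ as plain sums over $i$ rather than your $\ell^{p_\varphi}$ aggregation, which makes the final step a direct summation with no need to raise to the $p_\varphi$-th power and take roots.
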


\section{Experiments}

In our experiments, we used nine widely used benchmark datasets in graph classification tasks. 
  In particular, we consider (i) three molecule graphs~\citep{sutherland2003spline}: BZR\_MD, COX2\_MD, and DHFR\_MD; (ii) two biological graphs~\citep{kriege2012subgraph,borgwardt2005protein}: MUTAG and PROTEINS; and (iii) four social graphs: IMDB-Binary (IMDB-B), IMDB-Multi (IMDB-M), REDDIT-Binary (REDDIT-B), and REDDIT-Multi-5K (REDDIT-M-5K). The dataset statistics are given in appendix \Cref{{tab:datasets}}. 
 
\subsection{Experimental Setup}\label{Sec:Experimental_Setup}
To assess the effectiveness of our EMP summaries in graph representation learning, we assess them using the random forest (RF) classifier in a graph classification task. We select RF to underline EMP's adaptability, although EMP can seamlessly integrate with advanced DL models as a trainable component.

For the RF classifier, we fix the forest's tree count at 1000, the minimum sample requirement at 2, and the Gini impurity for split quality measurement. All EMP representations we introduce are vectorized.
 
We apply filtrations based on the available information of each dataset, either using their graph structure or their provided node/edge features. Our pool of filtering functions include: atomic weight, closeness, edge-betweenness, weighted degree, Katz centrality, and Ricci curvatures. We also use power filtration as the last filtration direction. To test our EMP framework we use three vectorizations: Betti curves, Silhouettes and Entropy Summary functions, thus producing EMP matrix representations that can be embedded in classic and modern machine learning algorithms.  

We give further details on our experimental setup in \Cref{SecApx:DataExperiment}. Furthermore,  \cref{SecApx:CompComplexity} includes an analysis of computational complexity. The source code is available in Python\footnote{\url{https://www.dropbox.com/scl/fo/5kydyx2ivu1vpqi8hd0ob/h?rlkey=5u46k0x6p4hewpq5lhk9zpxl5&dl=0}}.

\begin{table*}[t!]
\centering
\caption{\textbf{Accuracy.} Classification accuracy (in \% $\pm$ standard deviation) of EMP summary on nine benchmark datasets. The best results are in {\bf bold} font and the second best results are marked {\it underlined}. \label{classification_results_MP_on_graphs}}
\vspace{.1in}
\resizebox{1\linewidth}{!}{
\setlength\tabcolsep{3pt}
\begin{tabular}{lccccccccc}
\toprule
\textbf{{Model}} &\textbf{{BZR\_MD}} & \textbf{{COX2\_MD}} & \textbf{{DHFR\_MD}}& \textbf{{MUTAG}} & \textbf{{PROTEINS}} &\textbf{{IMDB-B}}&\textbf{{IMDB-M}}&\textbf{{REDDIT-B}}&\textbf{{REDDIT-5K}} \\
\midrule
CSM~\cite{kriege2012subgraph} &\underline{77.63{\footnotesize $\pm$1.29} }&OOT &OOT &87.29{\footnotesize $\pm$1.25 }&OOT&OOT&OOT&OOT &OOT \\
H-SP~\cite{morris2016faster}& 60.08{\footnotesize $\pm$0.88 }& 59.92{\footnotesize $\pm$0.66 }& 67.95{\footnotesize $\pm$0.00 }& 80.90{\footnotesize $\pm$0.48 }&74.53{\footnotesize $\pm$0.35}&73.34{\footnotesize $\pm$0.47}& {\bf 51.58{\footnotesize $\pm$0.42} }& OOM &OOM \\
H-WL~\cite{morris2016faster}&52.64{\footnotesize $\pm$1.20 }& 57.15{\footnotesize $\pm$1.20 }& 66.08{\footnotesize $\pm$1.02}&75.51{\footnotesize $\pm$1.34 }& 74.53{\footnotesize $\pm$0.35}& 72.75{\footnotesize $\pm$1.02}& \underline{50.73{\footnotesize $\pm$0.63} }& OOM &OOM \\
MLG~\cite{kondor2016multiscale}& 51.46{\footnotesize $\pm$0.61}& 51.15{\footnotesize $\pm$0.00}& 67.95{\footnotesize $\pm$0.00 }& 78.53{\footnotesize $\pm$2.25 }& \underline{75.55{\footnotesize $\pm$0.71}}&52.56{\footnotesize $\pm$0.42}& 34.27{\footnotesize $\pm$0.33 }& OOM &OOM \\
WL~\cite{shervashidze2011weisfeiler}& 67.45{\footnotesize $\pm$1.40}& 60.07{\footnotesize $\pm$2.22}& 62.56{\footnotesize $\pm$1.51 }& 85.75{\footnotesize $\pm$1.96 }&  73.06{\footnotesize $\pm$0.47}&71.15{\footnotesize $\pm$0.47 }& 50.25{\footnotesize $\pm$0.72 }& 77.95{\footnotesize $\pm$0.60 }&  51.63{\footnotesize $\pm$0.37}\\
WL-OA~\cite{kriege2016valid} &68.19{\footnotesize $\pm$1.09}& 62.37{\footnotesize $\pm$2.11}& 64.10{\footnotesize $\pm$1.70}& 86.10{\footnotesize $\pm$1.95 }& 73.50{\footnotesize $\pm$0.87}&\underline{74.01{\footnotesize $\pm$0.66} }& 49.95{\footnotesize $\pm$0.46 }& 87.60{\footnotesize $\pm$0.33 }&OOM \\
{FC-V}~\cite{o2021filtration}&75.61{\footnotesize $\pm$1.13 }&\textbf{73.41{\footnotesize $\pm$0.79} }&\underline{76.78{\footnotesize $\pm$0.69}  }& \underline{87.31{\footnotesize $\pm$0.66} }& 74.54{\footnotesize $\pm$0.48 }&73.84{\footnotesize $\pm$0.36 }& 46.80{\footnotesize $\pm$0.37 }& 89.41{\footnotesize $\pm$0.24 }& 52.36{\footnotesize $\pm$0.37}\\
GNNs~\cite{erricafair}  &69.87{\footnotesize $\pm$1.29}& 66.05{\footnotesize $\pm$3.16}& 73.11{\footnotesize $\pm$1.59 }& 80.42{\footnotesize $\pm$2.07 }& \textbf{75.80{\footnotesize $\pm$3.70}}&71.20{\footnotesize $\pm$3.90 }& 49.10{\footnotesize $\pm$3.50 }& \underline{89.90{\footnotesize $\pm$1.90} }& {\bf 56.10{\footnotesize $\pm$1.60}}\\
\midrule
\textbf{EMP} & {\bf 77.76{\footnotesize $\pm$0.95}} & \underline{70.12{\footnotesize $\pm$0.81}} & {\bf 80.13{\footnotesize $\pm$0.94}} & {\bf 88.79{\footnotesize $\pm$0.63}}  & 72.78{\footnotesize $\pm$0.54  }& {\bf 74.44{\footnotesize $\pm$0.45} } & 48.01{\footnotesize $\pm$0.42}  & {\bf 91.03{\footnotesize $\pm$0.22} } & \underline{54.41{\footnotesize $\pm$0.32} } \\
\bottomrule
\end{tabular}}
\end{table*}

\subsection{Experimental Results}

We compare our model with three types of state-of-the-art baselines, covering six graph kernels, six graph neural networks (GNNs), and one topological method. Graph kernels: (i) comprised of the subgraph matching kernel (CSM)~\cite{kriege2012subgraph}, (ii) Shortest
Path Hash Graph Kernel (HGK-SP)~\cite{morris2016faster}, (iii) WL Hash
Graph Kernel (HGK-WL)~\cite{morris2016faster}, (iv) Multiscale Laplacian
Graph Kernel (MLG)~\cite{kondor2016multiscale}, (v) Weisfeiler–
Lehman (WL)~\cite{shervashidze2011weisfeiler}, and (vi) WL Optimal Assignment (WL-OA)~\cite{kriege2016valid}; topological method: filtration curves (FC-V)~\cite{o2021filtration}, six graph neural networks including GCN, DGCNN, Diffpool, ECC, GIN, GraphSage which are compared in~\cite{erricafair}. We report the best results of these six GNNs in the GNNs row in \Cref{classification_results_MP_on_graphs}. For each dataset, we report our best performing model (\Cref{tab:ablation}). For all methods, we report the average accuracy of 10 runs of 10-fold CV along with the standard deviation. 

Table~\ref{classification_results_MP_on_graphs} shows the results of different methods on nine graph datasets. Out-of-time (OOT) results indicate that a method could not complete the classification task within 12 hours, and OOM means ``out-of-memory'' (from an allocation of 128 GB RAM). In \Cref{tab:MPcomparison}, we further compare our EMP model with other existing MultiPersistence vectorizations in the literature. Namely, MP landscapes (MP-L)~\cite{vipond2020multiparameter}, MP Images (MP-I)~\cite{carriere2020multiparameter}, multiparameter persistence kernel (MP-K)~\cite{corbet2019kernel}, the generalized rank invariant landscape (GRIL)~\cite{xin2023gril}, and MP Hilbert and Euler characteristic functions (MP-H and MP-E)~\cite{oudot2021stability} and MP Signed Barcodes (MP-SB)~\cite{loiseaux2023stable} where we reported the best of the four different vectorizations namely, MP-HSM-C,	MP-ESM-C, MP-ESM-SW, MP-HSM-SW.

We observe the following phenomena:
\begin{itemize}
    \item[$\diamond$]  Compared with all baselines, out of 9 benchmark datasets, the proposed EMP summaries achieve the best performance on 5 datasets (BZR\_MD, DFHR\_MD, MUTAG, IMDB-B,  REDDIT-B), and second best in 2 datasets (COX2\_MD, REDDIT-5K).  
    
    \item[$\diamond$]  EMP summary consistently outperforms Filtration Curves on all datasets except COX2\_MD and PROTEINS, indicating that the multiparameter structure of the EMP summaries can better capture the complex structural properties and local topological information in heterogeneous graphs.

    \item[$\diamond$]  When compared with other Multipersistence Vectorizations, EMP consistently ranks among top two performances.
    
   \item[$\diamond$] Moreover, EMP Summaries consistently deliver competitive results with GNNs and kernel methods in all benchmark datasets.
     This indicates that EMP summary introduces powerful topological and geometric schemes for node features and graph representation learning.
\end{itemize}

\begin{table*}[h!]
\centering
\caption{Comparison with other Multipersistence Vectorizations \label{tab:MPcomparison}} 
\resizebox{1\columnwidth}{!}{
\setlength\tabcolsep{5pt}
\begin{tabular}{lccccccc|c}
\toprule
\textbf{Dataset} &	\textbf{MP-Kernel}	& \textbf{\footnotesize MP-Landscape} &	\textbf{MP-Images} &	\textbf{GRIL} &	\textbf{MP-Hilbert}& \textbf{MP-Euler}	&	\textbf{MP-SB} & \textbf{EMP}\\
\midrule
COX2 & \underline{79.9{\footnotesize $\pm$1.8}}&	79.0{\footnotesize $\pm$3.3 }&	77.9{\footnotesize $\pm$2.7 }&	79.8{\footnotesize $\pm$2.9}&	78.2{\footnotesize $\pm$1.7 }&	77.3{\footnotesize $\pm$1.1}&		{78.4{\footnotesize $\pm$0.7}}&	 \textbf{79.9{\footnotesize $\pm$0.8}}\\
IMDB-B &	68.2{\footnotesize $\pm$1.2 }&	71.2{\footnotesize $\pm$2.0 }&	71.1{\footnotesize $\pm$2.1 }&	65.2{\footnotesize $\pm$2.6 }&	73.0{\footnotesize $\pm$4.5 }&	72.0{\footnotesize $\pm$1.9}&		\textbf{75.1{\footnotesize $\pm$3.4 }}&	 \underline{74.4{\footnotesize $\pm$0.5}}\\
IMDB-M &	46.9{\footnotesize $\pm$2.6 }&	46.2{\footnotesize $\pm$2.3 }&	46.7{\footnotesize $\pm$2.7 }&	NA	&49.1{\footnotesize $\pm$1.6 }&	\underline{50.0{\footnotesize $\pm$0.8}}&		\textbf{51.1{\footnotesize $\pm$1.3}}&	48.0{\footnotesize $\pm$0.4}\\
MUTAG &	86.1{\footnotesize $\pm$5.2 }&	84.0{\footnotesize $\pm$6.8 }&	85.6{\footnotesize $\pm$7.3 }&	87.8{\footnotesize $\pm$4.2 }&	87.2{\footnotesize $\pm$5.8 }&	87.2{\footnotesize $\pm$4.3 }&		\textbf{89.9{\footnotesize $\pm$4.3 }}&  \underline{88.8{\footnotesize $\pm$0.6}}\\
PROTEINS &	67.5{\footnotesize $\pm$3.1 }&	65.8{\footnotesize $\pm$3.3 }&	67.3{\footnotesize $\pm$3.5 }&	70.9{\footnotesize $\pm$3.1 }&	70.2{\footnotesize $\pm$2.1 }&	70.7{\footnotesize $\pm$1.9 }&		\textbf{73.9{\footnotesize $\pm$1.7 }}&  \underline{72.8{\footnotesize $\pm$0.5}}\\
\bottomrule
\end{tabular}}
\end{table*}

\subsection{Ablation Study}  \label{SecApx:AblationStudy}
To further investigate the effectiveness of the EMP vectorization function in graph representation learning, we have conducted ablation studies of various EMP summaries on the benchmark datasets.
In \Cref{tab:ablation}, we give the performances of different types of EMP summaries (i.e., EMP Silhouette (EMP-S), EMP Entropy (EMP-E), and EMP Betti (EMP-B) using only $0$-dimensional topological features ($H_0$), only $1$-dimensional topological features ($H_1$) and using both ($H_0+H_1$). Note that in these models, we also included  graph-based features (i.e., $f_\V$: node features, and $f_{\E}$: edge features; see more details in \cref{Sec:Experimental_Setup} and \cref{SecApx:DataExperiment}). 

These results suggest that: 
\begin{itemize}
\item[$\diamond$] The choice of the EMP summary can significantly affect the performance (e.g., BZR\_MD, DFHR\_MD),
\item[$\diamond$]  Using both dimensions ($H_0+H_1$) may not be better than using only one of the dimensions (e.g., PROTEINS, IMDB-B, REDDIT-5K)
\item[$\diamond$] The choice of topological dimension ($H_0$ or $H_1$) for EMP Summary can be crucial for some datasets (e.g. BZR\_MD, REDDIT-5K).
\end{itemize}

\begin{table*}[h!]
\centering
\caption{ {\bf Ablation Study.} Comparison of the performances of EMP summaries on nine benchmark datasets. The accuracies are given in \% ($\pm$ standard deviation) and {\bf bold numbers} represent the best results. \label{tab:ablation}}
\resizebox{1.\columnwidth}{!}{
\setlength\tabcolsep{3pt}
\begin{tabular}{lccccccccc}
\toprule
\textbf{{Model}} &\textbf{{BZR\_MD}} & \textbf{{COX2\_MD}} & \textbf{{DHFR\_MD}}& \textbf{{MUTAG}} & \textbf{{PROTEINS}} &\textbf{{IMDB-B}}&\textbf{{IMDB-M}}&\textbf{{REDDIT-B}}&\textbf{{REDDIT-5K}} \\
\midrule
EMP-B $H_0$ & 68.04$\pm$2.04  & 69.39$\pm$1.36 & 72.12$\pm$0.94 &88.56$\pm$0.66  & 71.54$\pm$0.43 & 73.13$\pm$0.44 & 46.33$\pm$0.43 & 90.52$\pm$0.20& \textbf{54.41$\pm$0.32} \\
EMP-B $H_1$ & 69.22$\pm$1.06  & \textbf{70.12$\pm$0.81} & 68.77$\pm$1.03 & 86.30$\pm$0.72 & 71.61$\pm$0.48  & \textbf{74.44$\pm$0.45}  & \textbf{48.01$\pm$0.42} & 88.23$\pm$0.28& 51.96$\pm$0.30 \\
EMP-B $H_0$ + $H_1$ & 69.08$\pm$1.47  & 69.27$\pm$1.15 & 74.21$\pm$0.91 & \textbf{88.79$\pm$0.63} & 71.52$\pm$0.53 & 73.20$\pm$0.36 &46.82$\pm$0.53 &\textbf{91.03$\pm$0.22} & 54.34$\pm$0.31  \\
\midrule
EMP-E $H_0$ & 65.29$\pm$1.75  &67.29$\pm$1.19  & 74.53$\pm$1.03 & 86.74$\pm$0.68  & \textbf{72.78$\pm$0.54}& 73.15$\pm$0.63 & 46.93$\pm$0.49 & 89.95$\pm$0.18& 53.47$\pm$0.36\\
EMP-E $H_1$ & {\bf 77.76$\pm$0.95}  &67.55$\pm$0.85  & 66.40$\pm$0.89 & 86.57$\pm$0.70 & 71.93$\pm$0.54 & 72.95$\pm$0.49 & 47.03$\pm$0.23  & 88.64$\pm$0.16&51.97$\pm$0.36 \\
EMP-E $H_0$ + $H_1$ & 73.32$\pm$1.09  &69.18$\pm$1.11  & 74.80$\pm$0.96 & 86.84$\pm$0.62 & 71.86$\pm$0.36 & 72.97$\pm$0.58 & 47.03$\pm$0.41 & 90.05$\pm$0.22 & 53.95$\pm$0.35\\
\midrule
EMP-S $H_0$ & 68.23$\pm$0.97  & 69.07$\pm$1.24 & 78.31$\pm$0.68 &87.89$\pm$0.57  & 71.51$\pm$0.60 & 73.83$\pm$0.39 & 47.38$\pm$0.53 & 88.42$\pm$0.24 & 52.29$\pm$0.19\\
EMP-S $H_1$ & 70.74$\pm$1.28  & 63.72$\pm$1.24 & 75.26$\pm$1.05 & 84.71$\pm$0.99 & 70.74$\pm$0.57 & 73.99$\pm$0.55 & 47.71$\pm$0.34 & 86.96$\pm$0.35 & 50.71$\pm$0.28\\
EMP-S $H_0$ + $H_1$ &  72.90$\pm$0.91 & 67.89$\pm$1.70 & \textbf{80.13$\pm$0.94 }& 88.10$\pm$0.83 & 71.61$\pm$0.56 &74.29$\pm$0.28  & 47.80$\pm$0.51  & 88.59$\pm$0.38& 52.75$\pm$0.22 \\
\bottomrule
\end{tabular}}
\end{table*}

\noindent {\bf Limitations:}
The main limitation of the EMP approach comes with the choice of filtering functions and pairing them. While these choices give flexibility to the model, the best function pairs to use may require domain information of the dataset. One way to bypass this issue is to use self-supervised methods to learn effective node and edge functions from the data.

\section{Conclusion}

We have introduced an innovative and computationally efficient summary approach for multidimensional persistence across various forms of data, with a specific focus on applications in graph-based machine learning. This novel framework, called Effective Multidimensional Persistence (EMP), offers a practical and effective method to integrate the concept of multidimensional persistence into real-world scenarios. The EMP approach seamlessly integrates with ML models, providing a unified enhancement to existing single persistent summaries.
In graph classification tasks, EMP summaries have demonstrated superior performance compared to state-of-the-art techniques across multiple benchmark datasets. Moreover, we have shown that EMP maintains important stability guarantees. This signifies a significant stride in bridging theoretical multipersistence concepts with the machine learning community, thus advancing the utilization of persistent homology in diverse contexts.
Looking ahead, our future endeavors aim to enrich the EMP framework by incorporating multiple slicing directions within the multipersistence grid. This involves leveraging deep learning methodologies to effectively combine outputs from various slicing directions.

\section*{Acknowledgements}

This work was partially supported by the National Science Foundation (NSF) under grants  DMS-2202584, DMS-2220613, OAC-1828467, DMS-1925346, CNS-2029661, DMS-2335846/2335847, TIP-2333703, OAC-2115094 Simons Foundation under grant \# 579977, a gift from Cisco Inc. and the grant from the Office of Naval Research (ONR) award N00014-21-1-2530. Part of this material is also based upon work supported by (while Y.R.G. was serving at)
the NSF. The views expressed in the article do not necessarily represent the views of NSF or ONR.
\bibliographystyle{unsrtnat}
\bibliography{emp}

\begin{thebibliography}{53}
\providecommand{\natexlab}[1]{#1}
\providecommand{\url}[1]{\texttt{#1}}
\expandafter\ifx\csname urlstyle\endcsname\relax
  \providecommand{\doi}[1]{doi: #1}\else
  \providecommand{\doi}{doi: \begingroup \urlstyle{rm}\Url}\fi

\bibitem[Chen et~al.(2022)Chen, Gel, and Poor]{chen2022time}
Yuzhou Chen, Yulia Gel, and H~Vincent Poor.
\newblock Time-conditioned dances with simplicial complexes: Zigzag filtration curve based supra-hodge convolution networks for time-series forecasting.
\newblock \emph{Neur{IPS}}, 35:\penalty0 8940--8953, 2022.

\bibitem[Akcora et~al.(2021)Akcora, Kantarcioglu, and Gel]{akcora2021blockchain}
C.~G. Akcora, M.~Kantarcioglu, and Y.~R. Gel.
\newblock Blockchain networks: Data structures of {B}itcoin, {M}onero, {Z}cash, {E}thereum, {R}ipple and {I}ota.
\newblock \emph{arXiv:2103.08712}, 2021.

\bibitem[Chen et~al.(2021)Chen, Segovia-Dominguez, Coskunuzer, and Gel]{chen2021tamp}
Yuzhou Chen, Ignacio Segovia-Dominguez, Baris Coskunuzer, and Yulia Gel.
\newblock {TAMP-S2GCN}ets: coupling time-aware multipersistence knowledge representation with spatio-supra graph convolutional networks for time-series forecasting.
\newblock In \emph{{ICLR}}, 2021.

\bibitem[Chazal and Michel(2021)]{chazal2021introduction}
Fr{\'e}d{\'e}ric Chazal and Bertrand Michel.
\newblock An introduction to topological data analysis: fundamental and practical aspects for data scientists.
\newblock \emph{Frontiers in Artificial Intelligence}, 4, 2021.

\bibitem[Hensel et~al.(2021)Hensel, Moor, and Rieck]{hensel2021survey}
Felix Hensel, Michael Moor, and Bastian Rieck.
\newblock A survey of topological machine learning methods.
\newblock \emph{Frontiers in Artificial Intelligence}, 4:\penalty0 52, 2021.

\bibitem[Giunti(2022)]{giunti22}
Barbara Giunti.
\newblock {TDA} applications library, 2022.
\newblock \url{https://www.zotero.org/groups/2425412/tda-applications/library}.

\bibitem[Botnan and Lesnick(2022)]{botnan2022introduction}
Magnus~Bakke Botnan and Michael Lesnick.
\newblock An introduction to multiparameter persistence.
\newblock \emph{arXiv preprint arXiv:2203.14289}, 2022.

\bibitem[Lesnick and Wright(2015)]{lesnick2015interactive}
Michael Lesnick and Matthew Wright.
\newblock Interactive visualization of {2-D} persistence modules.
\newblock \emph{arXiv preprint arXiv:1512.00180}, 2015.

\bibitem[Carri{\`e}re and Blumberg(2020)]{carriere2020multiparameter}
Mathieu Carri{\`e}re and Andrew Blumberg.
\newblock Multiparameter persistence image for topological machine learning.
\newblock \emph{Neur{IPS}}, 33, 2020.

\bibitem[Botnan et~al.(2022)Botnan, Oppermann, and Oudot]{botnan2021signed}
Magnus~Bakke Botnan, Steffen Oppermann, and Steve Oudot.
\newblock Signed barcodes for multi-parameter persistence via rank decompositions.
\newblock In \emph{38th International Symposium on Computational Geometry (SoCG 2022)}. Schloss Dagstuhl-Leibniz-Zentrum f{\"u}r Informatik, 2022.

\bibitem[Vipond(2020)]{vipond2020multiparameter}
Oliver Vipond.
\newblock Multiparameter persistence landscapes.
\newblock \emph{J. Mach. Learn. Res.}, 21:\penalty0 61--1, 2020.

\bibitem[Loiseaux et~al.(2023{\natexlab{a}})Loiseaux, Carri{\`e}re, and Blumberg]{loiseaux2023framework}
David Loiseaux, Mathieu Carri{\`e}re, and Andrew~J Blumberg.
\newblock A framework for fast and stable representations of multiparameter persistent homology decompositions.
\newblock \emph{NeurIPS}, 2023{\natexlab{a}}.

\bibitem[Loiseaux et~al.(2023{\natexlab{b}})Loiseaux, Scoccola, Carri{\`e}re, Botnan, and Oudot]{loiseaux2023stable}
David Loiseaux, Luis Scoccola, Mathieu Carri{\`e}re, Magnus~Bakke Botnan, and Steve Oudot.
\newblock Stable vectorization of multiparameter persistent homology using signed barcodes as measures.
\newblock \emph{NeurIPS}, 2023{\natexlab{b}}.

\bibitem[Bruna et~al.(2014)Bruna, Zaremba, Szlam, and LeCun]{bruna2013spectral}
Joan Bruna, Wojciech Zaremba, Arthur Szlam, and Yann LeCun.
\newblock Spectral networks and locally connected networks on graphs.
\newblock In \emph{ICLR}, 2014.

\bibitem[Defferrard et~al.(2016)Defferrard, Bresson, and Vandergheynst]{defferrard2016convolutional}
M.~Defferrard, X.~Bresson, and P.~Vandergheynst.
\newblock Convolutional neural networks on graphs with fast localized spectral filtering.
\newblock In \emph{Neur{IPS}}, pages 3844--3852, 2016.

\bibitem[Kipf and Welling(2017)]{kipf2016semi}
Thomas~N Kipf and Max Welling.
\newblock Semi-supervised classification with graph convolutional networks.
\newblock \emph{ICLR}, 2017.

\bibitem[Klicpera et~al.(2019)Klicpera, Bojchevski, and G{\"u}nnemann]{klicpera2018predict}
Johannes Klicpera, Aleksandar Bojchevski, and Stephan G{\"u}nnemann.
\newblock Predict then propagate: Graph neural networks meet personalized pagerank.
\newblock In \emph{ICLR}, 2019.

\bibitem[Abu-El-Haija et~al.(2019)Abu-El-Haija, Perozzi, Kapoor, Alipourfard, Lerman, Harutyunyan, Steeg, and Galstyan]{abu2019mixhop}
Sami Abu-El-Haija, Bryan Perozzi, Amol Kapoor, Nazanin Alipourfard, Kristina Lerman, Hrayr Harutyunyan, Greg~Ver Steeg, and Aram Galstyan.
\newblock Mixhop: Higher-order graph convolutional architectures via sparsified neighborhood mixing.
\newblock In \emph{{ICML}}, 2019.

\bibitem[Abu-El-Haija et~al.(2020)Abu-El-Haija, Kapoor, Perozzi, and Lee]{abu2020n}
Sami Abu-El-Haija, Amol Kapoor, Bryan Perozzi, and Joonseok Lee.
\newblock {N-GCN: Multi-scale graph convolution for semi-supervised node classification}.
\newblock In \emph{Uncertainty in Artificial Intelligence}, pages 841--851, 2020.

\bibitem[Chen et~al.(2020)Chen, Gel, and Avrachenkov]{LFGCN}
Y.~Chen, Y.R. Gel, and K.~Avrachenkov.
\newblock {LFGCN}: Levitating over graphs with levy flights.
\newblock In \emph{IEEE ICDM}, 2020.

\bibitem[Monti et~al.(2018)Monti, Otness, and Bronstein]{monti2018motifnet}
Federico Monti, Karl Otness, and Michael~M Bronstein.
\newblock {MotifNet}: a motif-based graph convolutional network for directed graphs.
\newblock In \emph{IEEE DSW}, pages 225--228, 2018.

\bibitem[Veli{\v{c}}kovi{\'c} et~al.(2018)Veli{\v{c}}kovi{\'c}, Cucurull, Casanova, Romero, Lio, and Bengio]{velivckovic2017graph}
Petar Veli{\v{c}}kovi{\'c}, Guillem Cucurull, Arantxa Casanova, Adriana Romero, Pietro Lio, and Yoshua Bengio.
\newblock Graph attention networks.
\newblock \emph{ICLR}, 2018.

\bibitem[Yang et~al.(2019)Yang, Wang, Song, Yuan, and Tao]{yang2019spagan}
Yiding Yang, Xinchao Wang, Mingli Song, Junsong Yuan, and Dacheng Tao.
\newblock Spagan: shortest path graph attention network.
\newblock In \emph{International Joint Conference on Artificial Intelligence}, pages 4099--4105, 2019.

\bibitem[Qu et~al.(2019)Qu, Bengio, and Tang]{qu2019gmnn}
Meng Qu, Yoshua Bengio, and Jian Tang.
\newblock Gmnn: Graph markov neural networks.
\newblock In \emph{{ICML}}, pages 5241--5250. PMLR, 2019.

\bibitem[Dey and Wang(2022)]{dey2022computational}
Tamal~Krishna Dey and Yusu Wang.
\newblock \emph{Computational Topology for Data Analysis}.
\newblock Cambridge University Press, 2022.

\bibitem[Kriege et~al.(2020)Kriege, Johansson, and Morris]{kriege2020survey}
Nils~M Kriege, Fredrik~D Johansson, and Christopher Morris.
\newblock A survey on graph kernels.
\newblock \emph{Applied Network Science}, 5\penalty0 (1):\penalty0 1--42, 2020.

\bibitem[Ali et~al.(2023)Ali, Asaad, Jimenez, Nanda, Paluzo-Hidalgo, and Soriano-Trigueros]{ali2022survey}
Dashti Ali, Aras Asaad, Maria-Jose Jimenez, Vidit Nanda, Eduardo Paluzo-Hidalgo, and Manuel Soriano-Trigueros.
\newblock A survey of vectorization methods in topological data analysis.
\newblock \emph{IEEE Transactions on Pattern Analysis and Machine Intelligence}, 2023.

\bibitem[Adams et~al.(2017)Adams, Emerson, Kirby, Neville, Peterson, Shipman, Chepushtanova, Hanson, Motta, and Ziegelmeier]{adams2017persistence}
Henry Adams, Tegan Emerson, Michael Kirby, Rachel Neville, Chris Peterson, Patrick Shipman, Sofya Chepushtanova, Eric Hanson, Francis Motta, and Lori Ziegelmeier.
\newblock Persistence images: A stable vector representation of persistent homology.
\newblock \emph{Journal of Machine Learning Research}, 18, 2017.

\bibitem[Bubenik(2015)]{Bubenik:2015}
P.~Bubenik.
\newblock Statistical topological data analysis using persistence landscapes.
\newblock \emph{JMLR}, 16\penalty0 (1):\penalty0 77--102, 2015.

\bibitem[Chazal et~al.(2014)Chazal, Fasy, Lecci, Rinaldo, and Wasserman]{chazal2014stochastic}
Fr{\'e}d{\'e}ric Chazal, Brittany~Terese Fasy, Fabrizio Lecci, Alessandro Rinaldo, and Larry Wasserman.
\newblock Stochastic convergence of persistence landscapes and silhouettes.
\newblock In \emph{Proceedings of the thirtieth annual symposium on Computational geometry}, pages 474--483, 2014.

\bibitem[Chung and Lawson(2022)]{chung2019persistence}
Yu-Min Chung and Austin Lawson.
\newblock Persistence curves: A canonical framework for summarizing persistence diagrams.
\newblock \emph{Advances in Computational Mathematics}, 48\penalty0 (1):\penalty0 6, 2022.

\bibitem[Demir et~al.(2022)Demir, Coskunuzer, Gel, Segovia-Dominguez, Chen, and Kiziltan]{demir2022todd}
Andac Demir, Baris Coskunuzer, Yulia Gel, Ignacio Segovia-Dominguez, Yuzhou Chen, and Bulent Kiziltan.
\newblock {ToDD}: Topological compound fingerprinting in computer-aided drug discovery.
\newblock \emph{Neur{IPS}}, 35:\penalty0 27978--27993, 2022.

\bibitem[Lesnick and Wright(2022)]{lesnick2022computing}
Michael Lesnick and Matthew Wright.
\newblock Computing minimal presentations and bigraded {B}etti numbers of 2-parameter persistent homology.
\newblock \emph{SIAM Journal on Applied Algebra and Geometry}, 6\penalty0 (2):\penalty0 267--298, 2022.

\bibitem[Sutherland et~al.(2003)Sutherland, O'brien, and Weaver]{sutherland2003spline}
Jeffrey~J Sutherland, Lee~A O'brien, and Donald~F Weaver.
\newblock Spline-fitting with a genetic algorithm: A method for developing classification structure-activity relationships.
\newblock \emph{Journal of chemical information and computer sciences}, 43\penalty0 (6):\penalty0 1906--1915, 2003.

\bibitem[Kriege and Mutzel(2012)]{kriege2012subgraph}
Nils Kriege and Petra Mutzel.
\newblock Subgraph matching kernels for attributed graphs.
\newblock In \emph{{ICML}}, pages 291--298, 2012.

\bibitem[Borgwardt et~al.(2005)Borgwardt, Ong, Sch{\"o}nauer, Vishwanathan, Smola, and Kriegel]{borgwardt2005protein}
Karsten~M Borgwardt, Cheng~Soon Ong, Stefan Sch{\"o}nauer, SVN Vishwanathan, Alex~J Smola, and Hans-Peter Kriegel.
\newblock Protein function prediction via graph kernels.
\newblock \emph{Bioinformatics}, 21\penalty0 (suppl\_1):\penalty0 i47--i56, 2005.

\bibitem[Morris et~al.(2016)Morris, Kriege, Kersting, and Mutzel]{morris2016faster}
Christopher Morris, Nils~M Kriege, Kristian Kersting, and Petra Mutzel.
\newblock Faster kernels for graphs with continuous attributes via hashing.
\newblock In \emph{2016 IEEE 16th International Conference on Data Mining (ICDM)}, pages 1095--1100. IEEE, 2016.

\bibitem[Kondor and Pan(2016)]{kondor2016multiscale}
Risi Kondor and Horace Pan.
\newblock The multiscale laplacian graph kernel.
\newblock \emph{Neur{IPS}}, 29, 2016.

\bibitem[Shervashidze et~al.(2011)Shervashidze, Schweitzer, Van~Leeuwen, Mehlhorn, and Borgwardt]{shervashidze2011weisfeiler}
Nino Shervashidze, Pascal Schweitzer, Erik~Jan Van~Leeuwen, Kurt Mehlhorn, and Karsten~M Borgwardt.
\newblock Weisfeiler-lehman graph kernels.
\newblock \emph{Journal of Machine Learning Research}, 12\penalty0 (9), 2011.

\bibitem[Kriege et~al.(2016)Kriege, Giscard, and Wilson]{kriege2016valid}
Nils~M Kriege, Pierre-Louis Giscard, and Richard Wilson.
\newblock On valid optimal assignment kernels and applications to graph classification.
\newblock \emph{Neur{IPS}}, 29, 2016.

\bibitem[O'Bray et~al.(2021)O'Bray, Rieck, and Borgwardt]{o2021filtration}
Leslie O'Bray, Bastian Rieck, and Karsten Borgwardt.
\newblock Filtration curves for graph representation.
\newblock In \emph{Proceedings of the 27th ACM SIGKDD Conference on Knowledge Discovery \& Data Mining}, pages 1267--1275, 2021.

\bibitem[Errica et~al.(2020)Errica, Podda, Bacciu, and Micheli]{erricafair}
Federico Errica, Marco Podda, Davide Bacciu, and Alessio Micheli.
\newblock A fair comparison of graph neural networks for graph classification.
\newblock In \emph{ICLR}, 2020.

\bibitem[Corbet et~al.(2019)Corbet, Fugacci, Kerber, Landi, and Wang]{corbet2019kernel}
Ren{\'e} Corbet, Ulderico Fugacci, Michael Kerber, Claudia Landi, and Bei Wang.
\newblock A kernel for multi-parameter persistent homology.
\newblock \emph{Computers \& graphics: X}, 2:\penalty0 100005, 2019.

\bibitem[Xin et~al.(2023)Xin, Mukherjee, Samaga, and Dey]{xin2023gril}
Cheng Xin, Soham Mukherjee, Shreyas~N Samaga, and Tamal~K Dey.
\newblock Gril: A $2 $-parameter persistence based vectorization for machine learning.
\newblock \emph{arXiv preprint arXiv:2304.04970}, 2023.

\bibitem[Oudot and Scoccola(2021)]{oudot2021stability}
Steve Oudot and Luis Scoccola.
\newblock On the stability of multigraded {B}etti numbers and hilbert functions.
\newblock \emph{arXiv preprint arXiv:2112.11901}, 2021.

\bibitem[Otter et~al.(2017)Otter, Porter, Tillmann, Grindrod, and Harrington]{otter2017roadmap}
Nina Otter, Mason~A Porter, Ulrike Tillmann, Peter Grindrod, and Heather~A Harrington.
\newblock A roadmap for the computation of persistent homology.
\newblock \emph{EPJ Data Science}, 6:\penalty0 1--38, 2017.

\bibitem[Edelsbrunner and Parsa(2014)]{edelsbrunner2014computational}
Herbert Edelsbrunner and Salman Parsa.
\newblock On the computational complexity of {B}etti numbers: reductions from matrix rank.
\newblock In \emph{Proceedings of the twenty-fifth annual ACM-SIAM symposium on discrete algorithms}, pages 152--160. SIAM, 2014.

\bibitem[Aktas et~al.(2019)Aktas, Akbas, and El~Fatmaoui]{aktas2019persistence}
Mehmet~E Aktas, Esra Akbas, and Ahmed El~Fatmaoui.
\newblock Persistence homology of networks: methods and applications.
\newblock \emph{Applied Network Science}, 4\penalty0 (1):\penalty0 1--28, 2019.

\bibitem[Edelsbrunner and Harer(2010)]{edelsbrunner2010computational}
Herbert Edelsbrunner and John Harer.
\newblock \emph{Computational topology: an introduction}.
\newblock American Mathematical Soc., 2010.

\bibitem[Eisenbud(2013)]{eisenbud2013commutative}
David Eisenbud.
\newblock \emph{Commutative algebra: with a view toward algebraic geometry}, volume 150.
\newblock Springer Science \& Business Media, 2013.

\bibitem[Lesnick(2015)]{lesnick2015theory}
Michael Lesnick.
\newblock The theory of the interleaving distance on multidimensional persistence modules.
\newblock \emph{Foundations of Computational Mathematics}, 15\penalty0 (3):\penalty0 613--650, 2015.

\bibitem[Atienza et~al.(2020)Atienza, Gonz{\'a}lez-D{\'\i}az, and Soriano-Trigueros]{atienza2020stability}
Nieves Atienza, Roc{\'\i}o Gonz{\'a}lez-D{\'\i}az, and Manuel Soriano-Trigueros.
\newblock On the stability of persistent entropy and new summary functions for topological data analysis.
\newblock \emph{Pattern Recognition}, 107:\penalty0 107509, 2020.

\bibitem[Johnson and Jung(2021)]{johnson2021instability}
Megan Johnson and Jae-Hun Jung.
\newblock Instability of the {B}etti sequence for persistent homology and a stabilized version of the betti sequence.
\newblock \emph{Journal of the Korean Society for Industrial and Applied Mathematics}, 25\penalty0 (4):\penalty0 296--311, 2021.

\end{thebibliography}

\newpage
\setcounter{page}{1}

\appendix
\centerline{\Large \bf Appendix}

\medskip

Appendix sections give additional details for our experiments and methods. In \Cref{SecApx:Experiments}, we give further details on our experiments, including experimental setup, computational complexity and effects of order of filtration on EMP's performance. In \Cref{sec:generalEMP}, we explain how to generalize EMP framework to general data, and other type of filtration methods. We also discuss the difficulties in multipersistence theory in general, and our contribution in this context in \Cref{sec:MP_theory}. Finally, in \Cref{secapx:stab}, we prove our stability theorem.

\vspace{-.1in}

\section{Further Details on Experiments} \label{SecApx:Experiments}

\subsection{Experimental Setup} \label{SecApx:DataExperiment}

We vectorize our proposed EMP representations as input to RF. 
In our experiments, we use nine benchmark datasets for graph classification tasks (see Table~\ref{tab:datasets}). We have run our models for graph classification tasks on an 8-core DO droplet machine with Intel Xeon Scalable processors at a base frequency of 2.5 Ghz. 
\Cref{tab:datasets} summarizes the statistics of the datasets in our experiments.

\begin{table*}[h!]
\centering
\caption{Summary statistics of the datasets. \label{tab:datasets}}
\resizebox{.8\columnwidth}{!}{
\setlength\tabcolsep{5pt}
\begin{tabular}{lccccccc}
\toprule
\textbf{{Dataset}} & \textbf{{\# Graphs}} &\textbf{{Avg.} $|\mathcal{V}|$} & \textbf{{Avg.} $|\mathcal{E}|$} & \textbf{{\# Class}} &\textbf{{\# Node Attr.}} &\textbf{{\# Edge Attr.}}\\
\midrule
BZR\_MD &306 &21.30& 225.06 &2 &3 &-  \\ 
COX2\_MD &303 &26.28 &335.12 &2 &3 &-  \\
DHFR\_MD & 393 &23.87 &283.02 &2 &- &1 \\
MUTAG &188 &17.93 &19.79 &2 &- &- \\
PROTEINS &1113 &39.06 &72.82 &2 &1 &- \\
IMDB-B &1000 &19.77 &96.53 &2 &- & -\\
IMDB-M & 1500 & 13.00 & 65.94 & 3 &- &-\\
REDDIT-B & 2000 & 429.63 & 497.75 & 2 &- &-\\
REDDIT-5K & 4999 & 508.82 & 594.87 &5 &- &-\\
\bottomrule
\end{tabular}}
\end{table*}

The resolution of vectorization is the most significant parameter, which may impact the computational performance and results. As such, we use a fixed resolution to get consistent results in all experiments and consider time constraints on server usage. We use a resolution size of 50$\times$50 for each summary function, and the standard parameters set by the Gudhi library in Python~\footnote{\url{https://gudhi.inria.fr/python/latest/}}. The order of landscape summary function is set to $1$ ($\max$), whilst the power of weights is set to $1$ for silhouette summaries.

Results in Table~\ref{classification_results_MP_on_graphs} come from computing MP using filtering functions as follows. BZR\_MD, COX2\_MD, and DHFR\_MD use weighted node-degree and edge-power filtrations. PROTEINS uses node-closeness and edge-betweenness power filtrations. MUTAG, IMDB-BINARY, and IMDB-MULTI use node-Katz centrality and edge-Ricci curvature power filtrations. REDDIT-BINARY and REDDIT-MULTI-5K use node-Katz centrality and edge-Ricci curvature filtrations. We performed an empirical analysis to select previous filter functions for each graph network. We include the number of nodes and the number of edges as graph features. We use three types of vectorizations: Betti curves, silhouette functions, and entropy summary functions. For each case, we compute both 0-dim and 1-dim MP topological features.

\subsection{Computational Complexity} 
\label{SecApx:CompComplexity}

Computational complexity (CC) of persistence diagram $\PD_k(\Delta)$ is $\mathcal{O}(\mathcal{N}^3)$, where $\mathcal{N}$ is the number of $k$-simplices in $\Delta$~\citep{otter2017roadmap}. 
CC of EMP summary $\mathbf{M}_\varphi^d$ depends on the vectorization $\varphi$ used and the number $d$ of the filtering functions one uses.  
If $r$ is the resolution size of the multipersistence grid, then one needs $r^{(d-1)}$ single persistence diagrams to obtain $\mathbf{M}_\varphi^d$. Therefore,  $\mathrm{CC}(\mathbf{M}_\varphi^d)=\mathcal{O}(r^{(d-1)}\cdot \mathcal{N}^3 \cdot C_\varphi(m))$ where $C_\varphi(m)$ is CC for $\varphi$ and $m$ is the number of barcodes in $\PD_k$,  e.g., if $\varphi$ is persistence landscape, then $C_\varphi(m)=m^2$ and hence CC for EMP Landscape with $d=2$ is $\mathcal{O}(r\cdot \mathcal{N}^3\cdot m^2)$. In practice, $r$ is a constant and $m$ is small compared to $\mathcal{N}$, hence the complexity is again reduced to $\mathcal{O}(  \mathcal{N}^3)$. 
On the other hand, as Betti numbers do not need $\PD_k$ to be computed, it is possible to obtain much faster algorithms for EMP Betti Summary \citep{edelsbrunner2014computational}. Recently, \cite{lesnick2022computing} introduced a quite fast algorithm for EMP Betti summaries with $\mathcal{O}(\mathcal{M}^3)$ time where $\mathcal{M}$ is the rank of the multipersistence module with minimal representation.

\section{EMP Framework} \label{sec:generalEMP}

\subsection{Further EMP Examples} \label{sec:further_examples}

\noindent {\bf EMP Silhouettes.} \quad Silhouette~\cite{chazal2014stochastic} is another very popular SP vectorizations method in machine learning applications. The idea is similar to Persistence Landscapes, but this vectorization uses the life span of the topological features more effectively. For $\PD(\G)=\{(b_i,d_i)\}_{i=1}^N$, let $\Lambda_i$ be the generating function for $(b_i,d_i)$ as defined in Landscapes above. Then, \textit{Silhouette} function $\psi$ is defined as $$\psi(\G)=\dfrac{\sum_{i=1}^N w_i\Lambda_i(t)}{\sum_{i=1}^m w_i}, \ t\in[\e_1,\e_q],$$ where the weight $w_i$ is mostly chosen as the life span $d_i-b_i$, and $\{\e_k\}_{k=1}^q$ represents the thresholds for the filtration used. Again such a Silhouette function $\psi(\G)$ produces a $1D$-vector $\vec{\psi}(\G)$ of size $1\times (2q-1)$ as in Persistence Landscapes case. Similar to the EMP Landscapes, with the threshold set $\{\beta_j\}_{j=1}^n$ for the second filtering function $g$, $\vec{\psi}_i=\vec{\psi}(\PD(\G_i,g)$ will be a vector of size $1\times 2n-1$. Then, as $\M_\psi^i=\vec{\psi}_i$ for each $1\leq i\leq m$, EMP Landscape $\M_\psi(\G)$ would be again a $2D$-vector (matrix) of size $m\times (2n-1)$  (\cref{Fig:EMP}).

\smallskip

\noindent {\bf EMP Persistence Images.} \quad Another common SP vectorization is Persistence Images~\cite{adams2017persistence}. Different than most SP vectorizations, Persistence Images produces $2D$-vectors. The idea is to capture the location of the points in the persistence diagrams with a multivariable function by using the $2D$ Gaussian functions centered at these points. For $\PD(\G)=\{(b_i,d_i)\}$, let $\phi_i$ represent a $2D$-Gaussian centered at the point $(b_i,d_i)\in \R^2$. Then, one defines a multivariable function, \textit{Persistence Surface}, $\wt{\mu}=\sum_iw_i\phi_i$ where $w_i$ is the weight, mostly a function of the life span $d_i-b_i$. To represent this multivariable function as a $2D$-vector, one defines a $k\times l$ grid (resolution size) on the domain of $\wt{\mu}$, i.e., threshold domain of $\PD(\G)$. Then, one obtains the \textit{Persistence Image}, a $2D$-vector (matrix) $\vec{\mu}=[\mu_{rs}]$  of size $k\times l$ where $\mu_{rs}=\int_{\Delta_{rs}}\wt{\mu}(x,y)\,dxdy$ and $\Delta_{rs}$ is the corresponding pixel (rectangle) in the $k\times l$ grid.

This time, the resolution size $k\times l$ is independent of the number of thresholds used in the filtering, the choice of $k$ and $l$ is completely up to the user. Recall that by applying the first function $f$, we have the nested subgraphs $\{\G_i\}_{i=1}^m$. For each $\G_i$, the persistence diagram $\PD(\G_i,g)$ obtained by sublevel filtration with $g$ induces a $2D$ vector $\vec{\mu}_i=\vec{\mu}(\PD(\G_i,g))$ of size $k\times l$. Then, define EMP Persistence Image as $\M_\mu^i=\vec{\mu}_i$, where $\M_\mu^i$ is the $i^{th}$-floor of the array $\M_\mu$. Hence, $\M_\mu(\G)$ would be a $3D$-vector (array) of size $m\times k\times l$ where $m$ is the number of thresholds for the first function $f$ and $k\times l$ is the chosen resolution size for the Persistence Image $\vec{\mu}$.

\subsection{EMP for Other Types of Data} \label{sec:othertypedata}

So far, to keep the exposition simple, we described our construction in the graph setup. However, our framework is suitable for various types of data. Let $\X$ be a an image data or a point cloud. Let $f:\X\to \R$ and $g:\X\to \R$ be two filtering functions on $\X$. For example, it can be the grayscale function for image data or the density function on point cloud data.

Let $f:\X\to \R$ be the filtering function with the threshold set $\{\alpha_i\}_1^m$. Let $\X_i=f^{-1}((-\infty,\alpha_i])$. Then, we get a filtering of $\X$ as nested subspaces $\X_1\subset \X_2\subset\dots \subset \X_m=\X$. By using the second filtering function, we obtain finer filtrations for each subspace $\X_i$ where $1\leq i\leq m$. In particular, fix $1\leq i_0\leq m$ and let $\{\beta_j\}_{j=1}^n$ be the threshold set for the second filtering function $g$. Then, by restricting $g$ to $\X_{i_0}$, we get a filtering function on $X_{i_0}$, i.e., $g:\X_{i_0}\to \R$ which produces filtering $\X_{i_01}\subset \X_{i_02}\subset \dots\subset \X_{i_0n}=\X_{i_0}$. By inducing a simplicial complex $\wh{\X}_{i_0j}$ 
for each $\X_{i_0j}$, we get a filtration $\wh{\X}_{i_01}\subset \wh{\X}_{i_02}\subset \dots\subset \wh{\X}_{i_0n}=\wh{\X}_{i_0}$. This filtration results in a persistence diagram  $\PD(\X_{i_0}, g)$. For each $1\leq i\leq m$, we obtain $\PD(\X_i,g)$. Note that after getting $\{\X_i\}_{i=1}^m$ via $f$, instead of using second filtering function $g$, one can apply power filtration or Vietoris-Rips construction based on distance for each $X_{i_0}$ in order to get a different filtration $\wh{\X}_{i_01}\subset \wh{\X}_{i_02}\subset \dots\subset \wh{\X}_{i_0n}=\wh{\X}_{i_0}$. 

By using $m$ PDs, we follow a similar route to define our EMP summaries.  Let $\varphi$ be a single persistence vectorization. By applying the chosen SP vectorization $\varphi$ to each PD, we obtain a function $\varphi_i=\varphi(\PD(\X_i,g))$ on the threshold domain $[\beta_1,\beta_n]$, which can be expresses as a $1D$ (or $2D$) vector in most cases (\cref{sec:EMPexamples}). Let $\vec{\varphi}_i$ be the corresponding $1\times k$ vector for the function $\varphi_i$. Define the corresponding EMP $\M_\varphi$ as  $\M_\varphi^i=\vec{\varphi}_i$ where $\M_\varphi^i$ is the $i^{th}$ row of $\M_\varphi$. In particular, $\M_\varphi$ is
a $2D$-vector (a matrix) of size $m\times k$ where $m$ is the number of thresholds for the first filtering function $f$, and $k$ is the length of the vector $\vec{\varphi}$.

\begin{figure}[t]
    \centering
    \includegraphics[width=.99\textwidth, angle =0]{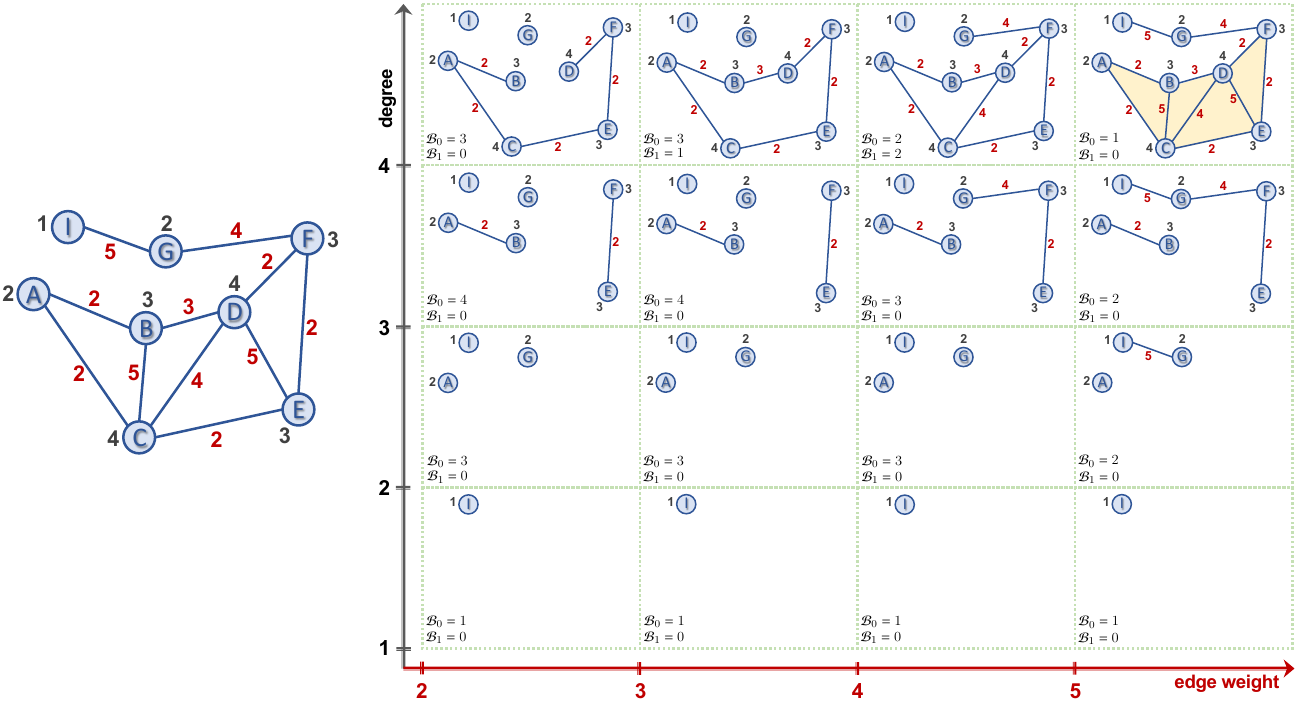}
    \caption{\scriptsize Multidimensional persistence on a graph network (original graph: left). Black numbers denote the degree values of each node whilst red numbers show the edge weights of the network. Hence, shape properties are computed on two filtering functions (i.e., degree and edge weight). While each row filters by degree, each column filters the corresponding subgraph using its edge weights. For each cell, lower left corners represent the corresponding threshold values. For each cell, $\mathcal{B}_{0}$ and $\mathcal{B}_{1}$ represent the corresponding Betti numbers.\label{Fig:ToyMP}}
\end{figure}

\subsection{EMP with Other Filtrations} \label{sec:otherfiltration}

\noindent {\bf Weight filtration} For a given weighted graph $\G=(\V,\E,\W)$, it is common to use edge weights $\W=\{\omega_{rs}\in \R^+\mid \e_{rs}\in\E\}$  to describe filtration. 
By choosing the threshold set similarly $\I=\{\alpha_i\}_1^m$ with $\alpha_1=\min\{\omega_{rs} \in \W\} <\alpha_2<\ldots<\alpha_m=\max\{\omega_{rs} \in \W\}$. For $\alpha_i\in \I$, let $\E_i=\{e_{rs}\in\V\mid \omega_{rs}\leq \alpha_i\}$. Let $\G^i$ be a subgraph of $\G$ induced by $\V_i$. 
This induces a nested sequence of subgraphs $\G_1\subset \G_2\subset \dots\subset\G_m=\G$ (See top row in \cref{Fig:ToyMP}). 

In the case of weighted graphs, one can apply the EMP framework just by replacing the first filtering (via $f$) with weight filtering. In particular, let $g:\V\to\R$ be a filtering function with threshold set $\{\beta_j\}_{j=1}^n$. Then, one can first apply weight filtering to get $\G_1\subset \dots\subset\G_m=\G$ as above, and then apply $f$ to each $\G_i$ to get a bilfiltration $\{\G_{ij}\}$ ($m\times n$ resolution). One gets $m$ PDs as $\PD(\G_i,g)$ and induce the corresponding $\M_\varphi$. Alternatively, one can change the order by applying $g$ first, and get a different filtering $\G_1\subset \G_2\subset \dots\subset\G_n=\G$ induced by $g$. Then, apply to edge weight filtration to any $\G_j$, one gets a bifiltration $\{\wh{\G}_{ji}\}$ ($n\times m$ resolution) this time. As a result, one gets $n$ PDs as $\PD(\G_i,\omega)$ and induce the corresponding $\M_\varphi$. The difference is that in the first case (first apply weights, then $g$), the filtering function plays more important role as $\M_\varphi$ uses $\PD(\G_i,g)$ while in the second case (first apply $g$, then apply weights) weights have more important role as $\M_\varphi$ is induced by $\PD(\G_j,\omega)$. Note also that there is a very different filtration method for weighted graphs by applying the following VR-complexes method.

\smallskip
\noindent {\bf Power (Vietoris-Rips) Filtration} There is a highly different filtration technique using distances between the data points in the dataset. The technique is called \textit{power filtration} for unweighted graphs~\cite{aktas2019persistence}, while it is called \textit{Vietoris-Rips filtration} for other types of data~\cite{edelsbrunner2010computational}. The idea is for a point cloud $\X=\{x_1,x_2,\dots, x_N\}$, one uses the pairwise distances $d(x_r,x_s)$ to construct the simplicial complexes in the filtration. In particular, for a threshold set $\e_1<\e_2<\dots<\e_n=diam(\X)$, one forms a Vietoris-Rips complex $\Delta_j$ by adding a $k$-simplex to $\X$ for any subset $\{x_{r_0},x_{r_1},\ldots,x_{r_k}\}$, where the pairwise distances are all $<\e_j$. If a pair of points $x_{r_1},x_{r_2}$ has distance $<\e_j$, then in the induced simplicial complex $\Delta_j$, we add an edge between the corresponding vertices $x_r$ and $x_s$. If three such points $x_{r_1},x_{r_2}, x_{r_3}$ have pairwise distances $<\e_j$, then we fill the triangle $e_{r_1r_2}\cup e_{r_2r_3}\cup e_{r_3r_1}$ with a $2$-simplex, and so on. This procedure induces in a hierarchical nested sequence of simplicial complexes $\Delta_1 \subset \Delta_2 \subset\ldots \subset \Delta_m$ that is termed \textit{Vietoris-Rips filtration} induced by the point cloud $\X$. For unweighted graphs, one takes the vertex set $\V$ as the point cloud and defines the distances $d(v_i,v_j)$ as the shortest distance in the graph where each edge has length $1$. For weighted graphs, one can do the same by defining edge lengths induced by the weights. 

One can adapt Vietoris-Rips filtrations to our EMP setting as follows. Start with a filtering function $f:\X\to \R$ with threshold set  $\{\alpha_i\}_1^m$ and obtain $\X_1\subset \X_2\subset\dots \subset \X_m=\X$ where $\X_i=f^{-1}((-\infty, \alpha_i])$. Then, apply Vietoris-Rips filtration to each $\X_{i_0}$ for threshold set $\{\e_j\}_{j=1}^n$ which produces a filtration $\wh{\X}_{i_01}\subset \wh{\X}_{i_02}\subset \dots\subset \wh{\X}_{i_0n}$ where $\wh{\X}_{i_0j}$ is the Vietoris-Rips complex of $\X_{i_0}$ for threshold $\e_j$.  
Construct  $\PD(\X_i, VR)$ of these filtrations for each $1\leq i\leq m$. The following steps are the same~\cref{sec:EMPexamples}. For a given SP vectorization $\varphi$, let $\vec{\varphi}_i$ be the corresponding $1\times k$ vector induced by $\varphi(\PD(\X_i, VR))$ with domain $[\e_1,\e_n]$. Then, define EMP $\M_\varphi$ as  $\M_\varphi^i=\vec{\varphi}_i$ where $\M_\varphi^i$ is the $i^{th}$ row of $\M_\varphi$. Again, $\M_\varphi$ is a $2D$-vector (a matrix) of size $m\times k$ where $m$ is the number of thresholds for the filtering function $f$, and $k$ is the length of the vector $\vec{\varphi}$. 

\subsection{Order of Filtration} 
\label{SecApx:AS:OrderFilt}

We need to note that for given two filtering functions $f,g$, the order is quite important for our algorithm. In particular, let $\M_\varphi(f,g)$ represent the above construction where we first apply $f$ to get filtering $\{\G_i\}_{i=1}^m$, and then we obtain $m$ different $\PD(\G_i,g)$. Hence, $\M_\varphi(f,g)$ would be a $m\times n$ matrix. On the other hand, if we apply $g$ first, we would get a filtering $\{\G^j\}_{j=1}^n$. Then, by using sublevel filtration with $g$ for each $\G^j$, we would get $n$ persistence diagrams $\PD(\G^j,f)$. Assuming we use the same thresholds for $f$ and $g$ in both orders, then $\M_\varphi(g,f)$ would give us $n\times m$ size matrices. In particular, in the first one $\M_\varphi(f,g)$ we use ``horizontal'' slicing in the bipersistence module, while in the latter $\M_\varphi(g,f)$, we use ``vertical'' slicing. 
For the question \textit{``Which function should be used first''}, the answer is that the function with more important domain information should be used as a second function ($g$ in the original construction), as we get much finer information via persistence diagram $\PD(\G_i,g)$ for the second function. 
This asymmetry enriches our method as one can combine both feature vectors obtained in different order as they do not contain the same information about the multipersistence grid. 
To observe the effect of changing the order of the filtering functions, we run experiments on three benchmark datasets, BZR\_MD, DFHR\_MD and REDDIT-BINARY. Our experiments show that in some datasets, the order can be highly important, while in others, it has a negligible effect on the performance \Cref{tab:order}. 

\begin{table}[t]
\caption{\footnotesize {\bf Filtration Order.} Impact on classification accuracy when swapping the filtration order. $S_{\V}$ and $W_{\E}$ denote sublevel filtration on nodes and weight filtration on edges, respectively.  EMP-Betti, EMP-Entropy, and EMP-Silhouette performances when using both $H_0$ and $H_1$ features. \textbf{Bold} numbers represent statistically significant superior performance with respect to other orders. \label{tab:order}}
\centering
\footnotesize
\setlength\tabcolsep{5pt}
\begin{tabular}{ l c c c c}
\toprule
{\bf Model}  & \textbf{Order} & \textbf{BZR\_MD} & \textbf{DFHR\_MD} & \textbf{Reddit-B} \\
\midrule
\textbf{EMP-B}   & $S_{\V} \leadsto W_{\E}$ & 73.20$\pm$1.57 & 74.21$\pm$0.91 & 91.03$\pm$0.22 \\
 & $W_{\E} \leadsto S_{\V}$ & \textbf{75.06$\pm$0.93} & \textbf{75.59$\pm$1.05} & 90.77$\pm$0.20 \\
\midrule
\textbf{EMP-E } & $S_{\V} \leadsto W_{\E}$ & \textbf{75.30$\pm$1.30} & \textbf{74.80$\pm$0.96} & 90.05$\pm$0.22\\
 & $W_{\E} \leadsto S_{\V}$ & 73.05$\pm$1.28 & 73.15$\pm$0.75 & 90.28$\pm$0.17  \\
\midrule
\textbf{EMP-S}  & $S_{\V} \leadsto W_{\E}$ & \textbf{77.86$\pm$0.80} & \textbf{80.13$\pm$0.94}& 88.59$\pm$0.37 \\
 & $W_{\E} \leadsto S_{\V}$ & 75.13$\pm$1.10 & 74.45$\pm$1.27 & \textbf{90.58$\pm$0.21} \\
\bottomrule
\end{tabular}
\end{table}

\subsection{Multidimensional Persistence Theory} \label{sec:MP_theory}

Multipersistence theory is under intense research because of its promise to significantly improve the performance and robustness properties of single persistence theory. While single persistence theory obtains the topological fingerprint of single filtration, a multidimensional filtration with more than one parameter should deliver a much finer summary of the data to be used with ML models. However, because of the technical issues in the theory, multipersistence has not reached to its potential yet and remains largely unexplored by the ML community. Here, we provide a short summary of these technical issues. For further details, \cite{botnan2022introduction}~gives a nice outline of current state of the theory and major obstacles. 

In single persistence, the threshold space $\{\alpha_i\}$ being a subset of $\mathbb{R}$, is totally ordered, i.e., birth time~$<$~death time for any topological feature appearing in the filtration sequence $\{\Delta_i\}$. By using this property, it was shown that “barcode decomposition” is well-defined in single persistence theory in the 1950s [Krull-Schmidt-Azumaya Theorem~\cite{botnan2022introduction}--Theorem 4.2]. This decomposition makes the persistence module $M=\{H_k(\Delta_i)\}_{i=1}^N$ uniquely decomposable into barcodes. This barcode decomposition is exactly what we call a {\em Persistence Diagram}.

However, when one goes to higher dimensions, i.e. $d=2$, then the threshold set $\{(\alpha_i,\beta_j)\}$ is no longer totally ordered, but becomes partially ordered (Poset). In other words, some indices have ordering relation $(1,2)< (4,7)$, while some do not, e.g., (2,3) vs. (1,5). Hence, if one has a multipersistence grid $\{\Delta_{ij}\}$, 
we no longer can talk about birth time or death time as there is no order any more. Furthermore, Krull-Schmidt-Azumaya Theorem is no longer true for upper dimensions~\cite{botnan2022introduction}--Section 4.2. Hence, for general multipersistence modules barcode decomposition is not possible, and the direct generalization of single persistence to multipersistence fails. On the other hand, even if the multipersistence module has a good barcode decomposition, because of partial ordering, representing these barcodes faithfully is another major problem. Multipersistence modules are an important subject in commutative algebra, where one can find the details of the topic in~\cite{eisenbud2013commutative}. 

While complete generalization is out of reach for now, several attempts have been tried to utilize MP idea~\citep{lesnick2015theory}. One of the first such novel ideas came from \cite{lesnick2015interactive} where they suggest using one-dimensional slices in the MP grid, and to get the signature of the most dominant features. Later, \cite{carriere2020multiparameter} combined several slicing directions (vineyards) and obtained a vectorization by summarizing several persistence diagrams (PDs) in these directions. Slicing techniques use the persistence diagrams of predetermined one-dimensional slices in the multipersistence grid and then combine (compress) them as one-dimensional output~\cite{botnan2022introduction}. In that respect, one major issue is that the topological summary highly depends on the predetermined slicing directions in this approach, and how to decide this direction. The other problem is the loss of information when compressing the information in various persistence diagrams.

Another novel approach to vectorizing the multipersistence module is presented by~\citet{vipond2020multiparameter}. In this study, the author effectively extends persistence landscapes into higher dimensions. This approach does not use a specific global slice direction. For every point \(\mathbf{x} \in \R^n\)—where \(n\) signifies the dimension of the multipersistence module—the \(k^{th}\)-landscape explores the widest direction in which the rank invariant has a nontrivial image. Viewed from this perspective, the Multipersistence Landscape might be regarded as a more faithful representation of the multipersistence module. While the MP Landscape can be notably effective in scenarios where vital information is derived from a few predominant topological features, such as point clouds or sparse data, its computational intensity makes it less practical for analyzing large datasets with numerous topological features. Conversely, our approach offers a less computationally intensive, yet more versatile vectorization, which can be efficiently applied to various datasets.

As explained above, the MP approach has still technical problems to reach its full potential, and there are several attempts to utilize this idea. In this paper, we do not claim to solve theoretical problems of multipersistence homology but offer a novel, highly practical multidimensional topological summary by advancing the existing methods. We use the grid directions in the multipersistence module as natural slicing directions and produce multidimensional topological summary of the data. 

As a result, these multidimensional topological fingerprints are capable of capturing very fine topological information hidden in the data. Furthermore, in the case the data provides more than two very important filtering functions, our framework easily accommodates these functions and induces corresponding substructures. Then, our EMP model captures the evolving topological patterns of these substructures and summarize them in matrices and arrays which are highly practical output formats to be used with various ML models. 

Our model is highly different from previous work mostly because of its practicality and computational efficiency. Among these, the closest method to ours is \cite{carriere2020multiparameter} which employs slicing techniques in a different way. Like us, they have predetermined slicing options (vineyards), and they compute the single persistence diagrams on these slices and combine them in a unique way by using weight functions induced by lifespans of the topological features in this collection of persistence diagrams. In our approach, we use only horizontal slices and do not compress the information. First, choosing horizontal slices is computationally very feasible to obtain persistence diagrams. Second, we offer a variety of options on how to vectorize these persistence diagrams. Hence, depending on the dataset, one can use vectorization methods that emphasize long barcodes (Silhouette with $p>1$, Entropy, Persistence Image) or the ones which consider all signals equally (Betti). Our experiments proved that these varieties of options can be quite useful as some EMP vectorizations give much better results than others in different datasets (\Cref{SecApx:AblationStudy}).

\section{Stability} \label{secapx:stab}

\subsection{Stability of Single Persistence Summaries} \label{sec:stability2}
For a given PD vectorization, stability is one of the most important properties for statistical purposes. Intuitively, the stability question is whether a small perturbation in PD cause a big change in the vectorization or not. To make this question meaningful, one needs to define what "small" and ``big" means in this context. Therefore, we need to define the distance notion, i.e., metric in the space of persistence diagrams. The most common such metric is called \textit{Wasserstein distance} (or matching distance) which is defined as follows. Let $\PD(\X^+)$ and $\PD(\X^-)$ be persistence diagrams two datasets $\X^+$ and $\X^-$ (We omit the dimensions in PDs).  Let $\PD(\X^+)=\{q_j^+\}\cup \Delta^+$ and  $\PD(\X^-)=\{q_l^-\}\cup \Delta^-$ where $\Delta^\pm$ represents the diagonal (representing trivial cycles) with infinite multiplicity. Here, $q_j^+=(b^+_j,d_j^+)\in \PD(\X^+)$ represents the birth and death times of a hole $\sigma_j$ in $\X^+$. Let $\phi:\PD(\X^+)\to \PD(\X^-)$ represent a bijection (matching). With the existence of the diagonal $\Delta^\pm$ on both sides, we make sure the existence of these bijections even if the cardinalities $|\{q_j^+\}|$ and $|\{q_l^-\}|$ are different. Then, the $p^{th}$ Wasserstein distance $\W_p$ defined as $$\W_p(\PD(\X^+),\PD(\X^-))= \min_{\phi}\biggl(\sum_j\|q_j^+-\phi(q_j^+)\|_\infty^p\biggr)^\frac{1}{p}, \quad p\in \mathbb{Z}^+.$$

Then, a vectorization (function) $\varphi(\PD(\X))$ is called \textit{stable} if $\mathrm{d}(\varphi^+,\varphi^-)\leq C\cdot \W_p(\PD(\X^+),\PD(\X^-))$ where $\varphi^\pm=\varphi(\PD(\X^\pm))$ and $\mathrm{d}(.,.)$ is a suitable metric on the space of vectorizations used. Here, the constant $C>0$ is independent of $\X^\pm$. This stability inequality interprets as the changes in the vectorizations are bound by the changes in PDs. Two nearby persistence diagrams are represented by nearby vectorizations. If a given vectorization $\varphi$ satisfies such a stability inequality for some $\mathrm{d}$ and $\W_p$, we call $\varphi$ a \textit{stable vectorization}~\cite{atienza2020stability}. Persistence Landscapes~\cite{Bubenik:2015}, Persistence Images~\cite{adams2017persistence}, Stabilized Betti Curves~\cite{johnson2021instability} and several Persistence curves~\cite{chung2019persistence} are among well-known examples of stable vectorizations.

\subsection{Stability of EMP Summaries}  \label{sec:stability}

We now show that when the source single parameter vectorization $\varphi$ is stable, then so is its induced EMP vectorization $\M_\varphi$. We give the details of the stability notion in persistence theory and examples of stable SP vectorizations in \cref{sec:stability2}.

Let $\G^+=(\V^+,\E^+)$ and $\G^-=(\V^-,\E^-)$ be two graphs. Let $\varphi$ be a stable SP vectorization with the stability equation 
\begin{equation}\label{eqn1}
\mathrm{d}(\varphi(\G^+),\varphi(\G^-))\leq C_\varphi\cdot \W_{p_\varphi}(\PD(\G^+),\PD(\G^-))
\end{equation}
for some $1\leq p_\varphi\leq \infty$. Here, $\varphi(\G^\pm)$ represent the corresponding vectorizations for $\PD(\G^\pm)$ and $\W_p$ represents Wasserstein-$p$ distance as defined in \cref{sec:stability2}. 

Now, by taking $d=2$ for EMP construction, let $f,g:\V^\pm\to \R$ be two filtering functions with threshold sets $\{\alpha_i\}_{i=1}^m$ and $\{\beta_j\}_{j=1}^n$ respectively. Then, by defining $\V^\pm_i=\{v_r\in\V^\pm\mid f(v_r)\leq \alpha_i\}$, their induced subgraphs $\{\G^\pm_i\}$ give the filtration $\wh{\G}_1\subset\wh{\G}_2\subset \dots\wh{\G}_m$ as before. For each $1\leq i\leq m$, we will have persistence diagram $\PD(\G_i,g)$ as detailed in \cref{sec:EMP}.
We define the induced matching distance between the multiple persistence diagrams as 
 {\small 
\begin{equation}\label{eqn2}
 \mathbf{D}_p(\{\PD(\G_i^+)\},\{\PD(\G_i^-)\})\\=\sum_{i=1}^m\W_p(\PD(\G^+_i, g), \PD(\G^-_i, g)).
 \end{equation}  } 
Now, we define the distance between induced EMP Summaries as 
\begin{equation}\label{eqn3}
\mathfrak{D}(\M_\varphi(\G^+),\M_\varphi(\G^-))=\sum_{i=1}^m \mathrm{d}(\varphi(\G^+_i),\varphi(\G^-_i))
\end{equation}

\begin{theorem*} \label{thm:stability}
Let $\varphi$ be a stable SP vectorization. Then, the induced EMP Vectorization $\M_\varphi$ is also stable, i.e., with the notation above, there exists $\wh{C}_\varphi>0$ such that for any pair of graphs $\G^+$ and $\G^-$, we have the following inequality.
$$\mathfrak{D}(\M_\varphi(\G^+),\M_\varphi(\G^-))\leq \wh{C}_\varphi\cdot \mathbf{D}_{p_\varphi}(\{\PD(\G^+)\},\{\PD(\G^-)\})$$
\end{theorem*}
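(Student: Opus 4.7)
The plan is to prove the EMP stability theorem by a direct slice-by-slice reduction to the assumed single-parameter stability of $\varphi$. The key observation is purely structural: both sides of the desired inequality are defined in equations (2) and (3) as sums indexed by $i=1,\dots,m$ over the horizontal slices of the bifiltration, so if I can control each summand on the left by the corresponding summand on the right using the SP stability (equation (1)), then summing over $i$ will immediately yield the claimed bound.

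First, I would fix an index $1\leq i\leq m$. By the EMP construction in \Cref{sec:EMP}, $\G_i^+$ and $\G_i^-$ are the subgraphs induced by the sublevel filtration of $f$ at threshold $\alpha_i$, and $\PD(\G_i^\pm, g)$ are genuine single-parameter persistence diagrams obtained by restricting $g$ to these subgraphs and running the standard sublevel filtration. Because $\varphi$ is assumed to be a stable SP vectorization, inequality (1) applies to this pair and yields
$$\mathrm{d}\bigl(\varphi(\G_i^+), \varphi(\G_i^-)\bigr)\ \leq\ C_\varphi\cdot \W_{p_\varphi}\bigl(\PD(\G_i^+, g),\, \PD(\G_i^-, g)\bigr),$$
where the constant $C_\varphi>0$ depends only on the vectorization $\varphi$ and not on the particular subgraphs being compared.

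Next, I would sum both sides of this inequality over $i=1,\ldots,m$. By definition (equation (3)), the left-hand side becomes exactly $\mathfrak{D}(\M_\varphi(\G^+),\M_\varphi(\G^-))$, and by definition (equation (2)) the right-hand side equals $C_\varphi\cdot \mathbf{D}_{p_\varphi}(\{\PD(\G^+)\},\{\PD(\G^-)\})$. Setting $\wh{C}_\varphi = C_\varphi$ completes the proof.

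Honestly, no serious obstacle arises: the theorem is essentially a tautological consequence of the fact that the EMP distances on matrices are defined as $\ell^1$-aggregates of per-slice SP distances, so stability transfers coordinatewise. The only point requiring genuine care is verifying that the constant $C_\varphi$ supplied by the SP stability inequality is indeed uniform in the underlying graph; this holds for all the standard stable vectorizations (Landscapes, Silhouettes, Persistence Images, stabilized Betti curves, etc.) because their stability constants depend only on the internal parameters of $\varphi$ (e.g., resolution, weighting exponent) rather than on the data. It is worth remarking that the same argument, via Hölder's inequality, would deliver an analogous bound if one replaced the summations in (2) and (3) by $\ell^q$-norms for any $1\leq q\leq \infty$, at the cost of a mildly different constant.
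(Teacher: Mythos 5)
Your proof is correct and follows essentially the same route as the paper's: apply the single-parameter stability inequality to each slice $\G_i^\pm$, note that the constant $C_\varphi$ is uniform in $i$, and sum over $i=1,\dots,m$ using the definitions of $\mathfrak{D}$ and $\mathbf{D}_{p_\varphi}$ to obtain the bound with $\wh{C}_\varphi=C_\varphi$. No substantive differences from the paper's argument.
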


\begin{proof} As $\varphi$ is a stable SP vectorization, for any $1\leq i\leq m$, we have $\mathrm{d}(\varphi(\G_i^+),\varphi(\G_i^-))\leq C_\varphi\cdot \W_{p_\varphi}(\PD(\G_i^+),\PD(\G_i^-))$ for some $C_\varphi>0$ by \cref{eqn1}, where 
$\W_{p_\varphi}$ is Wasserstein-$p$ distance.
Notice that the constant $C_\varphi>0$ is independent of $i$. Hence, 
\begin{align*}
\mathfrak{D}(\M_\varphi(\G^+),\M_\varphi(\G^-)) &\quad  =  & \sum_{i=1}^m \mathrm{d}(\varphi(\G^+_i),\varphi(\G^-_i)) \quad \quad \quad \quad  \quad \quad \\
\; & \quad \leq    & \sum_{i=1}^m C_\varphi\cdot \W_{p_\varphi}(\PD(\G_i^+),\PD(\G_i^-)) \quad \\
\; & \quad =  &  C_\varphi \sum_{i=1}^m \W_{p_\varphi}(\PD(\G_i^+),\PD(\G_i^-)) \quad \quad \\
\; & \quad =   & C_\varphi\cdot \mathbf{D}_{p_\varphi}(\{\PD(\G_i^+)\},\{\PD(\G_i^-)\}) \ \quad \\
\end{align*}
where the first and last equalities are due to~\cref{eqn2} and~\cref{eqn3}, while the inequality follows from~\cref{eqn1} which is true for any $i$. 
This concludes the proof of the theorem.
\end{proof}

\end{document}